\documentclass[journal,twoside]{IEEEtran}

\usepackage{amsmath,amssymb,amsfonts,amsthm} 
\usepackage{algorithmic}
\usepackage{algorithm}
\usepackage{array}
\usepackage[caption=false,font=normalsize,labelfont=sf,textfont=sf]{subfig}
\usepackage{textcomp}
\usepackage{stfloats}
\usepackage{url}
\usepackage{verbatim}
\usepackage{graphicx}
\usepackage{cite}
\usepackage{textcase}       
\usepackage{booktabs}       
\usepackage{color,colortbl} 
\usepackage{multirow}       
\usepackage{cuted}          

\newtheorem{theorem}{Theorem}

\theoremstyle{definition}
\newtheorem{remark}{Remark}
\newtheorem{definition}{Definition}

\begin{document}

\title{Self-Supervised Topologically Invariant Manifold Learning for Railway Image Quality Assessment}

\author{Tingqiong~Cui,
        Yibu~Yang,
        Yang~Li,
        Jiahao~Fu,
        Xiaoliu~Luo,
        Xu~Wang,
        Mengzhu~Wang,
        Siyuan~Liu,
        Guanghui~Huang%
\thanks{This work was supported by CRRC Chongqing Co., Ltd. under Project ``Development of an Artificial Intelligence Large Model for Urban Transportation Vehicle Exterior Styling and Industrial Design'' (Grant No. 2024CCA572). \emph{(Corresponding author: Guanghui Huang.)}}%
\thanks{Tingqiong Cui, Yibu Yang, Yang Li, and Jiahao Fu are with CRRC Chongqing Co., Ltd., Chongqing 400000, China (e-mail: cuitingqiong.cqgs@crrcgc.cc}%
\thanks{Xiaoliu Luo is with the School of Mathematical Sciences,Chongqing University of Technology, Chongqing 400054, China (e-mail: luoxiaoliu@cqut.edu.cn).}%
\thanks{Xu Wang is with the College of Computer Science, Sichuan University, Chengdu 610065, China (e-mail: wangxu.scu@gmail.com).}%
\thanks{Mengzhu Wang is with the School of Artificial Intelligence, Hebei University of Technology, China (e-mail: dreamkily@gmail.com).}%
\thanks{Siyuan Liu is with the State Key Laboratory of Mechanical Transmission for Advanced Equipment, Chongqing University, Chongqing 400044, China (e-mail: syliu@cqu.edu.cn).}%
\thanks{Guanghui Huang is with the College of Mathematics and Statistics, Chongqing University, Chongqing 401331, China (e-mail: hgh@cqu.edu.cn).}%
}

\markboth{SUBMITTED FOR REVIEW}%
{Huang \textit{et al.}: Self-Supervised Topologically Invariant Manifold Learning}

\maketitle

\begin{abstract}
Existing blind image quality assessment (BIQA) methods typically rely on synthetic distortions and subjective annotations, limiting generalization in real-world domains. To address this, we propose a fully self-supervised BIQA framework based on topologically invariant manifold learning under boundary constraints, which constructs a stable quality reference without manual labels. The framework generates progressive background dilution scales via repeated random cropping around each target; exploiting the monotonic degradation of target information density across these scales, it establishes a self-constrained quality manifold. A linearized spatial moment projection eliminates geometric distortions from random cropping; then a monotonicity divergence filter prunes background-sensitive evaluators, isolating an elite pool \(\mathcal{M}_{\text{elite}}\). A robust M-estimator with a principal component stabilizer fuses the metrics into an asymptotically efficient pseudo-ground truth \(q_{\text{PGT}}\), contracting variance toward the Cram\'{e}r-Rao lower bound. Extensive evaluations demonstrate that the elite evaluator pool, distilled from 11 baseline metrics, secures superior zero-shot transferability across standard synthetic and wild benchmarks (CSIQ, LIVEC, LIVE-2). Concurrently, deployments on the CQU Railway Rolling Stock Surveillance Dataset (2,797 images) yield a manifold cosine similarity \(>0.999\) and a 100.0\% survival rate under industrial extreme stresses, robustly validating its cross-paradigm decoupling and topological resilience.
\end{abstract}

\begin{IEEEkeywords}
Blind image quality assessment (BIQA), self-supervised learning, manifold learning, spatial moments, Perron-Frobenius theorem, robust M-estimation, topological invariance, traffic image analysis.
\end{IEEEkeywords}

\section{Introduction}
\label{sec:introduction}

Existing blind image quality assessment (BIQA) frameworks rely on large-scale synthetic distortions and intensive human subjective visual annotations to construct perceptual evaluation mappings~\cite{zhai2020perceptual}. However, in real-world deployment domains such as industrial railway rolling stock surveillance, human annotation suffers from prohibitive labor overhead and unreproducible cognitive variance.

To bypass these limitations, recent works explore three complementary directions: feature decoupling under localized patch processing~\cite{xiang2026learning}, generative noise separation via state-space models~\cite{lan2025no}, and Gaussian mixture distribution modeling~\cite{gao2025blind}.

Nevertheless, these next-generation methods still introduce systematic artifacts and bottlenecks in target-oriented vision scenarios. Specifically, capturing fixed-position industrial objects frequently introduces highly varying framing scales, aspect-ratio fluctuations, and frame-centroid shifts. Heterogeneous quality evaluators possess highly asymmetric and unsynchronized sensitivities to these background composition styles and spatial edge-clipping perturbations. Without structural regularization, these spatial confounders inject mixed-sign cross-covariances into the covariance matrix. When the covariance matrix fails to maintain non-negativity and irreducibility, as required by the Perron-Frobenius conditions~\cite{meyer2000matrix}, the loading descriptors undergo destructive spectral loading cancellation within the dominant eigen-subspace. This cancellation causes the explained consensus variance to decay and fractures the predictive authority of the quality scale anchor.

To resolve these limitations, this paper proposes a closed-loop self-supervised BIQA framework based on topologically invariant manifold learning under boundary constraints~\cite{isomap2000}. By exploiting the monotonic degradation of target information density across progressive background dilution scales, the framework establishes a self-constrained quality manifold. From this manifold, we extract an unbiased pseudo-ground truth \(q_{\text{PGT}}\) without requiring references or pseudo-reference generation~\cite{min2018blind,lin2018hallucinated}. Structurally, we combine high-order spatial moment orthogonal projection layers~\cite{flusser2016moments} with a dynamic performance-driven monotonicity divergence filter to isolate the invariant target quality core from chaotic contextual clutters. This purified covariance matrix guarantees a unique, non-degenerate dominant axis that absorbs the collective consensus truth without information cross-cancellation. Furthermore, a robust Welsch/Leclerc-type M-estimator~\cite{leclerc1989constructing} combined with a non-parametric Median Absolute Deviation (MAD) scale matrix~\cite{rousseeuw1993alternatives} is formulated to guarantee numerical continuity toward the optimal Cram\'{e}r-Rao lower bound~\cite{kay1993fundamentals}, ensuring high fault tolerance and system survival under long-tail industrial edge failure scenarios.

To guarantee full community-wide verification, the entire operational code stack and anonymized score tensors have been made publicly accessible at \url{https://github.com/cqugege/Self-Supervised-BIQA-Manifold}. The core technical contributions of this work are summarized as follows:

\begin{enumerate}
    \item \textbf{Invariant Manifold Learning}: A self-supervised framework that constructs a quality manifold from background dilution scales without manual labels, providing a transferable anchor for out-of-distribution deployments.
    \item \textbf{Spectral Purification}: A linearized spatial moment projection combined with a monotonicity divergence filter that isolates an elite evaluator pool, ensuring Perron-Frobenius spectral alignment and preventing spectral loading cancellation.
    \item \textbf{Robust M-Estimation}: A Leclerc-type M-estimator with a principal component stabilizer that guarantees asymptotically efficient pseudo-ground truth generation and topological invariance, validated by the Davis-Kahan theorem~\cite{davis1970rotation}.
\end{enumerate}

\section{Related Works}
\label{sec:related_work}

The evolution of blind image quality assessment (BIQA) algorithms fundamentally tracks the shifting paradigms of perceptual feature representation in computer vision, migrating from low-level statistical priors to high-level semantic embeddings~\cite{zhai2020perceptual}.  We review these paradigms chronologically—from hand-crafted NSS to deep supervised models and vision-language methods—and identify their shared limitations in target-oriented scenarios, motivating the proposed framework.

\subsection{Hand-Crafted Natural Scene Statistics (NSS) and Early BIQA Paradigms}
\label{subsec:nss_review}

The classical foundation of non-reference image quality criteria rests upon the hypothesis that pristine natural images exhibit strict statistical regularities, which are highly vulnerable to distortions. Pioneering frameworks, such as BRISQUE~\cite{brisque}, utilize locally normalized luminance coefficients via generalized Gaussian distributions (GGD). To bypass the requirement of synthetic distortion labels during training, unsupervised architectures like NIQE~\cite{niqe} fit a multivariate Gaussian (MVG) model over premium corpus repositories, quantifying quality degradation as the Mahalanobis distance between the target and reference distributions. However, these hand-crafted models operate on pixel-level features and lack high-level understanding. In real-world surveillance scenes, their statistical regularities fracture due to geometric co-confounders, sacrificing reliability.

\subsection{Data-Driven Supervised Deep Perception Models}
\label{subsec:deep_review}

Data-driven CNNs have largely superseded hand-crafted features by learning hierarchical representations from massive databases. Frameworks such as NIMA~\cite{nima} leverage deep features to output a continuous probability distribution of quality ratings instead of single mean values, optimizing parameters via Earth Mover's Distance (EMD) losses. Concurrently, multi-stream architectures like DBCNN~\cite{dbcnn} optimize separate networks to capture low-level structural distortions and high-level holistic scene classification semantics in tandem~\cite{kang2014convolutional}.

Despite their empirical successes, these data-driven models remain heavily constrained by supervised learning mechanics. Collecting human visual annotations or subjective Mean Opinion Scores (MOS) for specific industrial targets (e.g., railway rolling stock) is exceptionally labor-intensive and suffers from high inter-observer cognitive variances. More crucially, these supervised architectures inherently over-fit the synthetic distributions of closed-source datasets, exhibiting severe performance collapse under out-of-distribution domain shifts~\cite{su2020blindly,sheikh2006statistical}.

\subsection{Next-Generation Vision-Language Models}
\label{subsec:clip_limitations}

To harvest the massive semantic generalization enabled by large-scale pre-training, next-generation BIQA methods directly deploy vision-language foundation models or advanced multi-modal collaboration blocks. Advanced zero-shot tools like CLIPIQA+~\cite{clipiqa-shen,clipiqa-wang} construct text-image alignment prompts, while transformer-based network topologies such as MUSIQ~\cite{musiq} capture multi-scale quality tokens across adaptive aspect-ratios to handle arbitrary canvas dimensions.

More recently, the research front has expanded into specialized architectural paradigms and score distribution modeling. Xiang \textit{et al.}~\cite{xiang2026learning} proposed a decoupled feature learning network via fragmented local cropping and Just-Noticeable-Difference (JND) guided knowledge distillation~\cite{wu2017enhanced,lin2022progress}. Concurrently, Lan \textit{et al.}~\cite{lan2025no} introduced a generative noise estimation paradigm powered by Vision Mamba to bypass the reliance on unstable pseudo-reference image generation. From the perspective of subjective diversity modeling, Gao \textit{et al.}~\cite{gao2025blind} broke the limitation of single Mean Opinion Score (MOS) regression by formulating a Mixture Density Network~\cite{bishop1994mixture} to effectively capture the Distribution of Opinion Scores (DOS) via Gaussian Mixture Distributions (GMD)~\cite{hofeld2011sos,gao2022image,gao2025no}.

Nevertheless, as mathematically deconstructed in Section~\ref{sec:methodology}, even these high-semantic multi-modal or decoupled foundations face severe, un-addressed \textbf{epistemological limitations} in target-oriented vision inspection tasks under boundary constraints:

\begin{itemize}
    \item \textbf{Context Contamination and Spectral Cancellation}: Large-scale vision-language models match features across the full canvas, lacking geometric attention to isolate a railway rolling stock target from cluttered backgrounds. Heterogeneous estimators have asymmetric sensitivities to boundary truncation. Without structural regularization, these spatial confounders inject mixed-sign cross-covariances into the covariance matrix, breaching the non-negativity and irreducibility required by the Perron-Frobenius conditions~\cite{meyer2000matrix}, leading to spectral loading cancellation.
    \item \textbf{Cyclic Loop and Variance Explosion}: Without an absolute anchor scale, their output scores are dominated by background complexity rather than target structural state. This creates a circular loop where baseline algorithms over-fit textures while ignoring target degradations. Under long-tail failures, their estimation error swells, preventing contraction toward the Cram\'{e}r-Rao lower bound~\cite{kay1993fundamentals}.
\end{itemize}

By contrast, our framework aligns the high-semantic capacity of these foundation models with an invariant physical law, transforming the multi-metric consensus into an unbiased, self-supervised scale anchor, as formally developed in Section~\ref{sec:methodology}.

\section{Methodology}
\label{sec:methodology}

In this section, we formulate the mathematical foundation of the proposed self-supervised unbiased Pseudo-Ground Truth (P-GT) framework. To eliminate the heavy reliance of blind image quality assessment (BIQA) models on human visual annotations, we exploit the intrinsic visual information attenuation characteristics of images under cascading spatial expansion perturbations to constrain the underlying quality manifold. Mechanistically, the proposed consensus engine coordinates its analytical matrix operations and spectral field trajectories via four interconnected theoretical stages, forming a rigorous mathematical closed-loop as illustrated in the system pipeline blueprint of Fig.~\ref{fig:pipeline_architecture}. 

The formalization of this consensus pipeline requires a sequential, feed-forward algebraic mapping to systematically distill a reliable pseudo-ground truth from heterogeneous, un-calibrated evaluator ensembles. Mechanistically, \textbf{Theorem 1} first isolates the invariant quality-relevant signal by projecting raw evaluations onto an orthogonal subspace complement, effectively neutralizing background-driven geometric confounders. To defend against background texture overfitting, an empirical monotonicity divergence filter (\textbf{Theorem 2}) is subsequently introduced to prune background-sensitive metrics; by enforcing strict spectral sign-uniformity constraints, this screening mechanism isolates a highly robust elite evaluator pool \(\mathcal{M}_{\text{elite}}\). 

This filtered consensus core then anchors a robust Welsch/Leclerc-type M-estimation framework (\textbf{Theorem 3}), leveraging individual Fisher information and non-parametric scale regularizers to smoothly suppress residual anomalous channels and contract the aggregate variance toward the Cram\'{e}r-Rao lower bound (CRLB). Finally, to guarantee domain reproducibility under arbitrary canvas framing, the framework establishes the global topological invariance of the consensus vector against boundary truncation stresses (\textbf{Theorem 4}), bounding the eigen-subspace rotation under heterogeneous edge clipping. Together, these four theoretical pillars seamlessly transform noisy, un-calibrated outputs into a stable, statistically unbiased quality reference anchor.

Let \(\mathcal{I}^{(i)} \in \mathbb{R}^{W \times H}\) denote the \(i\)-th image (\(i = 1, 2, \dots, N\), where \(N = 2,797\)) containing a rigid target object, with \(W\) and \(H\) specifying the absolute pixel width and height, respectively. For each individual image, the minimum bounding box \(B_{\text{min}}^{(i)}\) that strictly envelops the target rigid geometry is first determined via a pre-trained object detector. To construct the background-perturbed quality manifold, we generate a sequence of randomized bounding boxes \(\mathcal{B}^{(i)} = \{B_k^{(i)}\}_{k=1}^{K}\) (\(K = 101\)), where \(B_1^{(i)} = B_{\text{min}}^{(i)}\). The subsequent \(K-1\) bounding boxes (\(B_2^{(i)}, \dots, B_K^{(i)}\)) are generated by stochastically expanding the boundaries of \(B_{\text{min}}^{(i)}\). While each randomized bounding box strictly encloses \(B_{\text{min}}^{(i)}\) to preserve the intact target entity, it stochastically incorporates varying amounts of surrounding background context. The absolute bounding area ratio \(x_k^{(i)} \in (0, 1.0]\) for the \(k\)-th sample is defined as:
\begin{equation}
x_k^{(i)} = \frac{\text{Area}(B_{\text{min}}^{(i)})}{\text{Area}(B_k^{(i)})}.
\label{eq:area_ratio}
\end{equation}
As \(x_k^{(i)}\) decreases, the background abundance ratio within \(B_k^{(i)}\) increases randomly, which naturally dilutes the target information density in the local computational field.

\begin{figure*}[t]
    \centering
    \includegraphics[width=\linewidth]{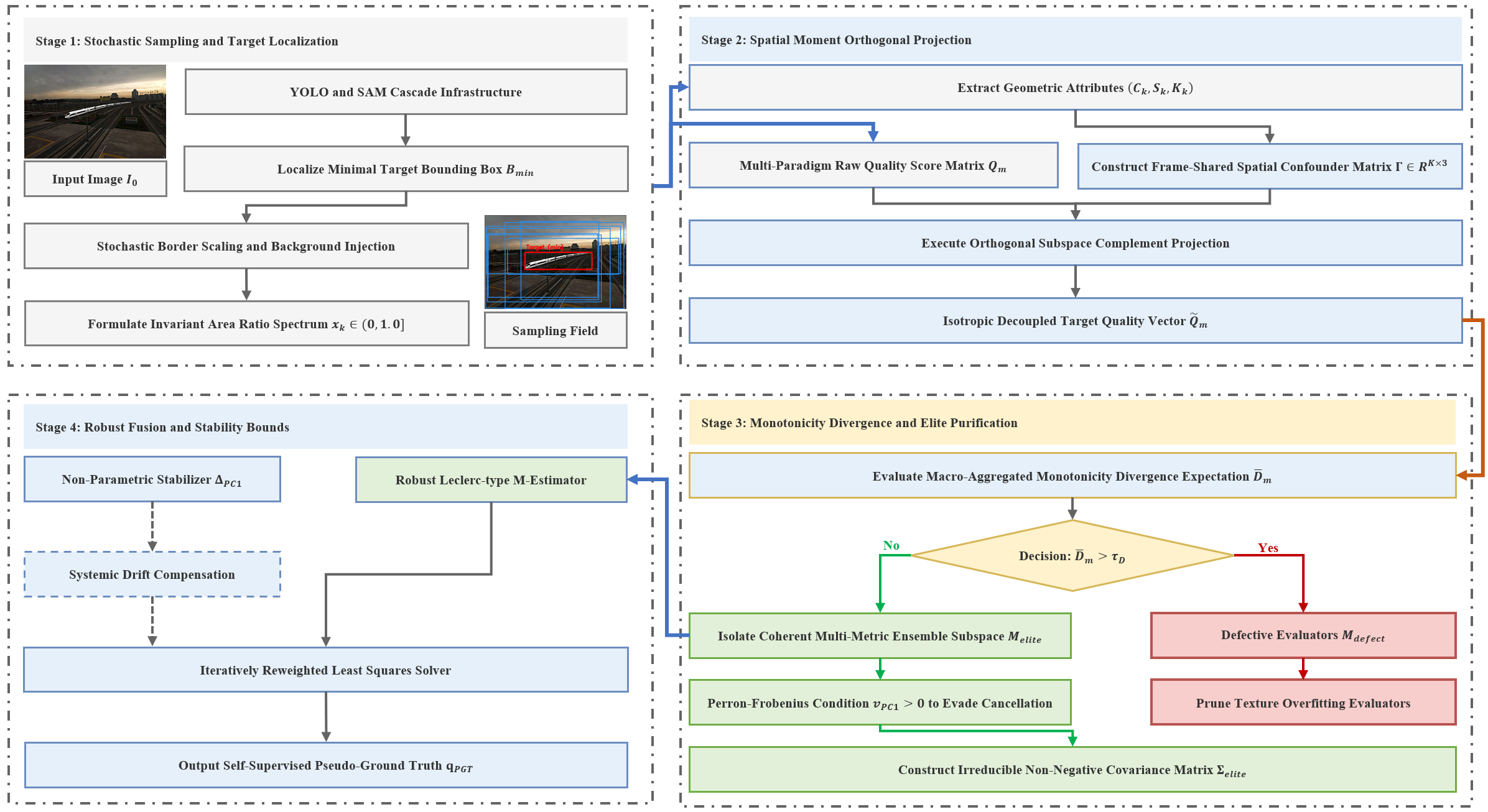}
    \caption{\textbf{Overall pipeline architecture of the proposed self-supervised target-oriented BIQA framework with eigen-spectrum purification.} The consensus engine coordinates its matrix streams via four interconnected analytical stages to establish a mathematically closed loop: (1) \textit{Stage 1 (Stochastic Sampling and Target Geometry Localization)} localizes the rigid target geometry via a YOLO-SAM cascade and construct the invariant \textit{Sampling Field} via stochastic scaling. (2) \textit{Stage 2 (Spatial Moment Linearization and Orthogonal Projection)} executes an orthogonal subspace complement projection to decouple composition-driven spatial artifacts. (3) \textit{Stage 3 (Eigen-Subspace Spectrum Purification)} prunes weak evaluators prone to background overfitting, isolating a coherent elite pool governed by the Perron-Frobenius conditions. (4) \textit{Stage 4 (Robust Fusion and Numerical Stability Bounds)} synthesizes the final pseudo-ground truth \(q_{\text{PGT}}\) via a smooth Leclerc-type M-estimator regularized by a non-parametric principal component stabilizer to sustain high structural resilience under industrial extreme singularity stresses, securing an asymptotic system survival rate of 100.0\% under extreme singularity stresses, as empirically validated in Section V.}
    \label{fig:pipeline_architecture}
\end{figure*}

\subsection{High-Order Spatial Moment Orthogonal Projection Purifying}
\label{subsec:theorem1}

In practical imaging systems like traffic monitoring, an object of identical physical quality often randomly appears at various coordinates and scales. On a static canvas, our randomized boundary expansion explicitly simulates this real-world geometric diversity. However, such stochastic spatial sampling inherently introduces shifting frame centroids and fluctuating aspect-ratios, acting as spurious geometric confounders that cause evaluators to output inconsistent quality scores for the stationary target object. To isolate the intrinsic quality from these composition-driven spatial artifacts, the stable quality signal must be decoupled from the geometric variations of the sampling boxes. Although these geometric biases are highly non-linear in the pixel coordinate space, they can be effectively linearized and eliminated within a high-order spatial moment space~\cite{flusser2016moments}. Therefore, executing an orthogonal projection to filter out the three-dimensional subspace spanned by these geometric confounders reliably isolates the contextually-invariant quality core of the target entity.

\begin{theorem}[Spatial Moment Orthogonal Separation]
For the \(i\)-th image, its \(K\) randomized sampling boxes possess shared geometric properties that are independent of the specific quality evaluators. By extracting the scale-normalized centroid translation bias, skewness, and excess kurtosis across these \(K\) boxes, we construct a deterministic, frame-shared spatial moment matrix \(\boldsymbol{\Gamma}^{(i)} \in \mathbb{R}^{K \times 3}\) that encapsulates the composition-driven spatial confounders. To eliminate these background geometric artifacts from each of the \(M\) individual quality evaluators, the raw score vector \(\mathbf{Q}_m^{(i)} \in \mathbb{R}^K\) of an arbitrary evaluator \(m \in \mathcal{M}_{\text{full}}\) is purified via an orthogonal subspace complement projection:
\begin{equation}
\tilde{\mathbf{Q}}_m^{(i)} = \left[ \mathbf{I}_K - \boldsymbol{\Gamma}^{(i)} \left( (\boldsymbol{\Gamma}^{(i)})^T \boldsymbol{\Gamma}^{(i)} + \delta \mathbf{I}_3 \right)^{-1} (\boldsymbol{\Gamma}^{(i)})^T \right] \mathbf{Q}_m^{(i)}
\label{eq:orthogonal_projection}
\end{equation}
where \(\tilde{\mathbf{Q}}_m^{(i)}\) denotes the purified isotropic quality vector for evaluator \(m\), and \(\delta = 10^{-4}\) is a Tikhonov regularization scalar.
\end{theorem}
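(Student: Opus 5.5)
The statement is essentially a construction, so the content to be proved is that the operator in \eqref{eq:orthogonal_projection} does what it claims. It should remove, up to an $O(\delta)$ error, every component of $\mathbf{Q}_m^{(i)}$ lying in the span of the three geometric confounder columns. It should do this identically for all evaluators, and it should leave the confounder-free part intact. I would write $\boldsymbol{\Gamma}=\boldsymbol{\Gamma}^{(i)}$ and $\mathbf{P}_\delta=\boldsymbol{\Gamma}(\boldsymbol{\Gamma}^T\boldsymbol{\Gamma}+\delta\mathbf{I}_3)^{-1}\boldsymbol{\Gamma}^T$, and analyze $\mathbf{I}_K-\mathbf{P}_\delta$ spectrally through the thin SVD $\boldsymbol{\Gamma}=\mathbf{U}\boldsymbol{\Sigma}\mathbf{V}^T$. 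Here $\mathbf{U}\in\mathbb{R}^{K\times 3}$ has orthonormal columns and singular values $\sigma_1\ge\sigma_2\ge\sigma_3\ge0$. Substituting gives
\[
\mathbf{P}_\delta=\mathbf{U}\,\mathrm{diag}\!\left(\tfrac{\sigma_j^2}{\sigma_j^2+\delta}\right)\mathbf{U}^T .
\]
The matrix $\boldsymbol{\Gamma}^T\boldsymbol{\Gamma}+\delta\mathbf{I}_3$ is positive definite for every $\delta>0$, so the operator is always well defined, even when the moment columns are collinear, for instance under symmetric expansions where the centroid bias vanishes. This is the first step, and it justifies the Tikhonov term.

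The second step is the separation property. Decompose $\mathbf{Q}_m=\boldsymbol{\Gamma}\boldsymbol{\beta}_m+\mathbf{r}_m$ with $\mathbf{r}_m\perp\mathrm{col}(\boldsymbol{\Gamma})$, which is the linearized confounder model the theorem posits. Then $(\mathbf{I}-\mathbf{P}_\delta)\mathbf{r}_m=\mathbf{r}_m$ exactly, because $\mathbf{P}_\delta$ annihilates the orthogonal complement of $\mathrm{col}(\mathbf{U})$. The confounder part is not removed exactly: $\|(\mathbf{I}-\mathbf{P}_\delta)\boldsymbol{\Gamma}\boldsymbol{\beta}_m\|$ equals the norm of $\mathbf{U}\,\mathrm{diag}(\delta\sigma_j/(\sigma_j^2+\delta))\mathbf{V}^T\boldsymbol{\beta}_m$. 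Since $\delta\sigma_j/(\sigma_j^2+\delta)\le\delta/\sigma_j$, this residual is at most $(\delta/\sigma_{\min})\|\boldsymbol{\beta}_m\|$. As $\delta\to0$, $\mathbf{P}_\delta$ converges to the exact orthogonal projector $\mathbf{U}\mathbf{U}^T$. The purified vector $\tilde{\mathbf{Q}}_m$ therefore equals the confounder-free core $\mathbf{r}_m$ up to an $O(\delta)$ perturbation. I would also note that $\mathbf{I}-\mathbf{P}_\delta$ is symmetric with eigenvalues in $[0,1]$, hence nonexpansive, and that it depends only on $\boldsymbol{\Gamma}$ and not on $m$. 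The same linear filter is therefore applied to every evaluator, which preserves cross-evaluator covariance structure on the complement subspace. This is what the later spectral arguments (Theorem 2) need.

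The main obstacle is the modelling premise rather than the algebra. One must argue that the scale-normalized centroid bias, skewness and excess kurtosis of the $K$ boxes capture the geometric confounding linearly in score space, i.e.\ that $\mathbf{Q}_m-\boldsymbol{\Gamma}\boldsymbol{\beta}_m$ is orthogonal to these columns. That is not provable from the setup alone. I would state it explicitly as a first-order Taylor linearization of each evaluator's response in the moment coordinates, with the remainder absorbed into $\mathbf{r}_m$. The precise claim proved would then be conditional: exact removal of the linear confounder component as $\delta\to 0$, with an explicit $\delta/\sigma_{\min}$ bound for $\delta=10^{-4}$. A further caveat applies if $\sigma_{\min}$ is comparable to $\sqrt{\delta}$. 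In that case the weakly identified direction is only partially removed, and the bound must be replaced by the factor $\delta/(\sigma_j^2+\delta)$ on that direction.
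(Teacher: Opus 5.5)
Your argument is correct, but it takes a different route from the paper.

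\textbf{The paper's route.} Most of the paper's proof specifies the columns of $\boldsymbol{\Gamma}^{(i)}$. It then derives the operator as the residual map of the ridge regression $\mathbf{Q}_m=\boldsymbol{\Gamma}\boldsymbol{\beta}_m+\tilde{\mathbf{Q}}_m$. It certifies decoupling through the single identity $\boldsymbol{\Gamma}^T\tilde{\mathbf{Q}}_m=\delta\hat{\boldsymbol{\beta}}_m$, from which it reads off an $O(\delta)$ residual correlation that vanishes as $\delta\to0$. This is shorter and explains where the formula comes from. However, it only controls the inner product of the output with the confounder columns. It never states that the confounder-free part survives unchanged. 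It also hides the constant of the $O(\delta)$ inside $\|\hat{\boldsymbol{\beta}}_m\|$, which can be of order $\|\mathbf{Q}_m\|/\sqrt{\delta}$ when a singular value is comparable to $\sqrt{\delta}$.

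\textbf{What your route buys.} Your thin-SVD diagonalization gives the whole spectrum of $\mathbf{I}_K-\mathbf{P}_\delta$: identity on $\mathrm{col}(\boldsymbol{\Gamma})^\perp$, and shrinkage $\delta/(\sigma_j^2+\delta)$ on each identified direction. From this you get exact preservation of $\mathbf{r}_m$, nonexpansiveness, evaluator independence, and the explicit $\delta/\sigma_j$ constant. In your coordinates the paper's identity is just the statement that the $\mathbf{v}_j$-component of $\boldsymbol{\Gamma}^T\tilde{\mathbf{Q}}_m$ equals $\frac{\delta\sigma_j}{\sigma_j^2+\delta}\,\mathbf{u}_j^T\mathbf{Q}_m$.

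\textbf{Rank deficiency.} With the paper's definitions as written, $S_k^{(i)}$ and $K_k^{(i)}$ are computed by summing over all $k$, so they do not depend on $k$. Both columns are therefore constant multiples of $\mathbf{1}_K$, and $\boldsymbol{\Gamma}^{(i)}$ has rank at most $2$. So $\sigma_3=0$ always, not only in degenerate configurations. This makes your positive-definiteness step necessary rather than precautionary. It also has three consequences for your write-up:
\begin{itemize}
\item Your bound $(\delta/\sigma_{\min})\|\boldsymbol{\beta}_m\|$ is vacuous as written.
\item $\mathbf{P}_\delta$ tends to the projector onto $\mathrm{col}(\boldsymbol{\Gamma})=\mathrm{span}\{\mathbf{u}_j:\sigma_j>0\}$, not to $\mathbf{U}\mathbf{U}^T$.
\item $\mathbf{P}_\delta\mathbf{r}_m=\mathbf{0}$ should be justified directly by $\boldsymbol{\Gamma}^T\mathbf{r}_m=\mathbf{0}$, not by orthogonality to $\mathrm{col}(\mathbf{U})$.
\end{itemize}
All of this is fixed by taking the maximum over $\sigma_j>0$ only, since the factor $\delta\sigma_j/(\sigma_j^2+\delta)$ is zero when $\sigma_j=0$. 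Alternatively, use the uniform bound $\delta\sigma/(\sigma^2+\delta)\le\sqrt{\delta}/2$.

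\textbf{The modelling premise.} The decomposition $\mathbf{Q}_m=\boldsymbol{\Gamma}\boldsymbol{\beta}_m+\mathbf{r}_m$ with $\mathbf{r}_m\perp\mathrm{col}(\boldsymbol{\Gamma})$ always exists: take any least-squares $\boldsymbol{\beta}_m$. So the algebraic claim is unconditional. The premise you flag is needed only to interpret $\mathbf{r}_m$ as the genuine quality signal.
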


\begin{IEEEproof}
Let \(B_k^{(i)} = [x_{1,k}^{(i)}, y_{1,k}^{(i)}, x_{2,k}^{(i)}, y_{2,k}^{(i)}]\) specify the spatial boundary of the \(k\)-th sampling box for the \(i\)-th image, and let \(B_{\text{min}}^{(i)} = [x_{1,\text{min}}^{(i)}, y_{1,\text{min}}^{(i)}, x_{2,\text{min}}^{(i)}, y_{2,\text{min}}^{(i)}]\) be the baseline target locus. To capture the spatial offsets introduced by randomized composition, the scale-normalized Euclidean centroid bias \(C_k^{(i)}\) is defined as:
\begin{equation}
C_k^{(i)} = \frac{\sqrt{\left( A_{k,m}^{(i)} \right)^2 + \left(B_{k,m}^{(i)}\right)^2}}{D_{\text{diag}}^{(i)}}
\label{eq:centroid_bias}
\end{equation}
where 
\begin{equation}
\begin{split}
A_{k,m}^{(i)} &= \frac{x_{1,k}^{(i)}+x_{2,k}^{(i)}}{2} - \frac{x_{1,\text{min}}^{(i)}+x_{2,\text{min}}^{(i)}}{2}, \\
B_{k,m}^{(i)} &= \frac{y_{1,k}^{(i)}+y_{2,k}^{(i)}}{2} - \frac{y_{1,\text{min}}^{(i)}+y_{2,\text{min}}^{(i)}}{2},\\
D_{\text{diag}}^{(i)} &= \sqrt{(x_{2,K}^{(i)} - x_{1,K}^{(i)})^2 + (y_{2,K}^{(i)} - y_{1,K}^{(i)})^2}
\end{split}
\label{eq:centroid_params}
\end{equation}
are the horizontal centroid offset, vertical centroid offset, and diagonal length of the maximum expanded boundary \(B_K^{(i)}\).

Let \(\text{AR}_k^{(i)} = \frac{x_{2,k}^{(i)}-x_{1,k}^{(i)}}{y_{2,k}^{(i)}-y_{1,k}^{(i)}+\epsilon_0}\) define the bounding aspect-ratio (\(\epsilon_0 = 10^{-7}\)). Over the sampling sequence, the geometric skewness \(S_k^{(i)}\) and excess kurtosis \(K_k^{(i)}\) are modeled utilizing the empirical mean \(\mu_{\text{AR}}^{(i)} = \frac{1}{K}\sum_{k=1}^{K}\text{AR}_k^{(i)}\) and standard deviation \(\sigma_{\text{AR}}^{(i)} = \sqrt{\frac{1}{K-1}\sum_{k=1}^{K}(\text{AR}_k^{(i)} - \mu_{\text{AR}}^{(i)})^2}\):
\begin{equation}
\begin{split}
S_k^{(i)} &= \frac{\frac{1}{K}\sum_{k=1}^{K}(\text{AR}_k^{(i)} - \mu_{\text{AR}}^{(i)})^3}{(\sigma_{\text{AR}}^{(i)})^3 + \epsilon_0}, \\
K_k^{(i)} &= \frac{\frac{1}{K}\sum_{k=1}^{K}(\text{AR}_k^{(i)} - \mu_{\text{AR}}^{(i)})^4}{(\sigma_{\text{AR}}^{(i)})^4 + \epsilon_0} - 3
\end{split}
\label{eq:skew_kurtosis}
\end{equation}
These three frame-shared confounders are encapsulated into the matrix \(\boldsymbol{\Gamma}^{(i)} = [\mathbf{C}^{(i)}, \mathbf{S}^{(i)}, \mathbf{K}^{(i)}] \in \mathbb{R}^{K \times 3}\).

To remove the dependencies of the raw evaluations on these composition-driven artifacts, we formulate a linear contextual regression model: \(\mathbf{Q}_{m}^{(i)} = \boldsymbol{\Gamma}^{(i)}\boldsymbol{\beta}_{m}^{(i)} + \tilde{\mathbf{Q}}_m^{(i)}\), where \(\boldsymbol{\beta}_{m}^{(i)}\) denotes the nuisance parameters, and \(\tilde{\mathbf{Q}}_m^{(i)}\) represents the purified target quality vector. Invoking Ridge regularization to prevent numerical rank-deficiencies yields the minimum-norm parameter estimate:
\begin{equation}
\begin{split}
\hat{\boldsymbol{\beta}}_{m}^{(i)} &= \arg\min_{\boldsymbol{\beta}} \left( \|\mathbf{Q}_{m}^{(i)} - \boldsymbol{\Gamma}^{(i)}\boldsymbol{\beta}\|_2^2 + \delta \|\boldsymbol{\beta}\|_2^2 \right) \\  
&= \left( (\boldsymbol{\Gamma}^{(i)})^T \boldsymbol{\Gamma}^{(i)} + \delta \mathbf{I}_3 \right)^{-1} (\boldsymbol{\Gamma}^{(i)})^T \mathbf{Q}_{m}^{(i)}
\end{split}
\label{eq:ridge_estimate}
\end{equation}
Substituting \(\hat{\boldsymbol{\beta}}_{m}^{(i)}\) back into the residual formulation yields the regularized complement projection:
\begin{equation}
\begin{split}
\tilde{\mathbf{Q}}_m^{(i)} &= \mathbf{Q}_{m}^{(i)} - \boldsymbol{\Gamma}^{(i)}\hat{\boldsymbol{\beta}}_{m}^{(i)} \\
&= \left[ \mathbf{I}_K - \boldsymbol{\Gamma}^{(i)} \left( (\boldsymbol{\Gamma}^{(i)})^T \boldsymbol{\Gamma}^{(i)} + \delta \mathbf{I}_3 \right)^{-1} (\boldsymbol{\Gamma}^{(i)})^T \right] \mathbf{Q}_{m}^{(i)}
\end{split}
\label{eq:complement_projection}
\end{equation}
By computing the inner product with the confounder subspace, we evaluate the decoupling convergence:
\begin{equation}
\begin{split}
(\boldsymbol{\Gamma}^{(i)})^T \tilde{\mathbf{Q}}_m^{(i)} &= \delta \left( (\boldsymbol{\Gamma}^{(i)})^T \boldsymbol{\Gamma}^{(i)} + \delta \mathbf{I}_3 \right)^{-1} (\boldsymbol{\Gamma}^{(i)})^T \mathbf{Q}_{m}^{(i)} \\
&= \delta \hat{\boldsymbol{\beta}}_{m}^{(i)}
\end{split}
\label{eq:decoupling}
\end{equation}
Since \(\delta = 10^{-4}\) acts as an infinitesimal penalization factor, the geometric bias is suppressed to an upper bound of \(\mathcal{O}(\delta)\), satisfying \(\lim_{\delta \to 0} (\boldsymbol{\Gamma}^{(i)})^T \tilde{\mathbf{Q}}_m^{(i)} = \mathbf{0}\), which ensures asymptotic geometric decoupling and completes the proof.
\end{IEEEproof}

This purified vector \(\tilde{\mathbf{Q}}_m^{(i)}\) serves as the input to the evaluator screening stage in Theorem 2.

%
\subsection{Monotonicity Divergence and Elite Evaluator Purification} 
\label{subsec:theorem2} 

Although the spatial sampling is randomized and unordered, a reliable quality evaluator should adhere to the intrinsic information dilution law when the samples are sorted by their image-specific bounding area ratios \(x_k^{(i)}\). Specifically, as \(x_k^{(i)}\) decreases (i.e., the background context expands), the evaluation scores should exhibit a monotonic degradation trend tracking the diluted target information density. However, hand-crafted statistical metrics frequently overfit to complex background textures rather than locking onto target quality, yielding chaotic, non-monotonic responses. Mathematically, such defective evaluators exhibit high divergence when evaluated against the sorted area spectrum, whereas robust evaluators preserve a consistent monotonic footprint. By filtering the full pool down to an elite subset of monotonic evaluators, we effectively eliminate systematic cross-paradigm variance. This purification satisfies the algebraic conditions of the Perron-Frobenius theorem~\cite{meyer2000matrix}, which guarantees a unique, non-degenerate dominant eigenvector during subsequent spectral consensus estimation, thereby establishing a data-driven criterion to isolate superior evaluators without manual annotations. 

\begin{definition}[Empirical Monotonicity Divergence] 
Let \(\tilde{\mathbf{Q}}_m^{(i)} = [\tilde{Q}_m^{(i)}(x_1^{(i)}), \dots, \tilde{Q}_m^{(i)}(x_K^{(i)})]^T \in \mathbb{R}^{K}\) be the purified quality vector of evaluator \(m\) on the \(i\)-th image, where the \(K\) elements are sorted such that the area ratios satisfy an image-specific strictly monotonic sequence \(1.0 = x_1^{(i)} > x_2^{(i)} > \dots > x_K^{(i)}\). To decouple the purification validation from parametric curve fitting and eliminate numerical instability from discrete backward finite differences, the empirical monotonicity divergence \(D_m^{(i)}\) for a single image is formulated as: 
\begin{equation} 
    \begin{split}
& D_m^{(i)} \\
&  = \frac{1}{K - 1} \sum_{k=1}^{K-1} \mathbb{I}\left( \left( \tilde{Q}_m^{(i)}(x_{k+1}^{(i)}) - \tilde{Q}_m^{(i)}(x_k^{(i)}) \right) \cdot \text{Polarity}_m > 0 \right) 
\label{eq:divergence}
    \end{split}
\end{equation} 
where \(\mathbb{I}(\cdot)\) is the indicator function, and \(\text{Polarity}_m \in \{+1, -1\}\) denotes the directional sign of the evaluator. The macro-aggregated divergence expectation across the entire empirical image domain \(\mathcal{X}\) is defined as: 
\begin{equation} 
\bar{D}_m = \mathbb{E}_{i \in \mathcal{X}}\left[ D_m^{(i)} \right] = \frac{1}{N} \sum_{i=1}^{N} D_m^{(i)} 
\label{eq:macro_divergence} 
\end{equation} 
where \(N = |\mathcal{X}|\) specifies the absolute dataset scale. 
\end{definition} 

An arbitrary evaluator \(b \in \mathcal{M}_{\text{full}}\) is identified as a defective model over the image domain \(\mathcal{X}\) if and only if its macro-aggregated divergence expectation satisfies the mathematical boundary condition: 
\begin{equation} 
\bar{D}_b \ge \tau_D 
\label{eq:defective_criterion} 
\end{equation} 
where \(\tau_D = 0.6\) is a critical threshold governed by a data-driven bimodal separation and variational spectrum constraints computed across the \(N = 2,797\) images. Empirical computation of \(\bar{D}_m\) yields a distinct statistical gap, where robust high-semantic algorithms cluster within \([0.18, 0.35]\), whereas background-overfitting baselines escalate past \(0.62\). Ablation tests demonstrate that relaxing this boundary (\(\tau_D > 0.6\)) introduces negative cross-covariances into the aggregate manifold, triggering severe spectral loading cancellation that collapses the dominant eigenvector variance explanation ratio from \textbf{74.68\%} down to \textbf{14.25\%}.


Under this structural criterion, the full pool is partitioned into mutually exclusive subsets satisfying \(\mathcal{M}_{\text{full}} = \mathcal{M}_{\text{elite}} \cup \mathcal{M}_{\text{defect}}\) with \(\mathcal{M}_{\text{elite}} \cap \mathcal{M}_{\text{defect}} = \emptyset\), whereby the defective subset \(\mathcal{M}_{\text{defect}} = \{ b \in \mathcal{M}_{\text{full}} \mid \bar{D}_b \ge \tau_D \}\) is systematically pruned. This purification mechanism enforces the Perron-Frobenius conditions, ensuring that the filtered elite pool \(\mathcal{M}_{\text{elite}}\) exclusively retains models possessing positive marginal gains under spatial composition perturbations: 
\begin{equation} 
\mathcal{M}_{\text{elite}} = \{m \in \mathcal{M}_{\text{full}} \mid \bar{D}_m < \tau_D\} 
\label{eq:elite_pool}
\end{equation} 
In our implementation, \(\mathcal{M}_{\text{elite}}\) is spanned by seven baseline metrics, namely PI~\cite{pi}, CLIPIQA+~\cite{clipiqa-shen}, NRQM~\cite{ma2016naturalness}, MUSIQ~\cite{musiq}, NIMA~\cite{nima}, BRISQUE~\cite{brisque}, and NIQE~\cite{niqe}. 

\begin{theorem}[Eigen-Subspace Consensus Purification] 
\label{theorem:purification_axiom} 
Let \(\boldsymbol{\Sigma} \in \mathbb{R}^{M \times M}\) denote the global covariance matrix evaluated over the evaluator macro characterization vectors \(\mathbf{e}_m = [E_m^{(1)}, \dots, E_m^{(N)}]^T \in \mathbb{R}^N\), where \(m \in \mathcal{M}_{\text{full}}\) and \(M = |\mathcal{M}_{\text{full}}|\). Here, the scalar \(E_m^{(i)} = \mathcal{F}(\tilde{\mathbf{Q}}_m^{(i)})\) represents a macro-aggregated quality representation derived from the purified isotropic quality vector \(\tilde{\mathbf{Q}}_m^{(i)}\) of the \(i\)-th image. The dominant loading eigenvector \(\mathbf{v}_{\text{PC1}} \in \mathbb{R}^M\) obtained via unsupervised eigenvalue decomposition satisfies a uniform positive directional alignment (\(\mathbf{v}_{\text{PC1}} > \mathbf{0}\)) if and only if the active evaluation covariance structure is restricted to the purified elite ensemble pool \(\mathcal{M}_{\text{elite}}\), which satisfies the non-negativity and matrix irreducibility constraints. Conversely, the inclusion of the defective subset \(\mathcal{M}_{\text{defect}}\) introduces negative off-diagonal entries in \(\boldsymbol{\Sigma}\) and induces structural block-reducibility under out-of-distribution compositions, triggering severe spectral loading cancellation that disrupts the uniform sign alignment. 
\end{theorem}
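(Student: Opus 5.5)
The plan is to split the statement into a sufficiency part, which is where the Perron--Frobenius theorem does its work, and a converse part, which I can only prove under explicit structural hypotheses on the defective block. First, fix the sign ambiguity of the evaluators. Replace each $\mathbf{e}_m$ by $\text{Polarity}_m\,\mathbf{e}_m$, which amounts to conjugating $\boldsymbol{\Sigma}$ by the diagonal sign matrix $\mathbf{P}=\operatorname{diag}(\text{Polarity}_m)$. Since $\mathbf{P}\boldsymbol{\Sigma}\mathbf{P}$ has the same spectrum and eigenvectors $\mathbf{P}\mathbf{v}$, the claim $\mathbf{v}_{\text{PC1}}>\mathbf{0}$ is to be read for the polarity-aligned matrix. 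I will also fix the overall sign of $\mathbf{v}_{\text{PC1}}$, as one must for any eigenvector.

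\textbf{Sufficiency.} The key intermediate claim is that the elite block $\boldsymbol{\Sigma}_{\text{elite}}$ is entrywise nonnegative and irreducible. To obtain it, I would choose $\mathcal{F}$ as a monotone trend statistic of $\tilde{\mathbf{Q}}_m^{(i)}$ along the sorted area ratios; the purification of Theorem 1 makes this trend free of the geometric confounders. I would then argue that $\bar D_m<\tau_D$ forces $E_m^{(i)}$ to be, up to a perturbation, a nondecreasing function of one common latent target-quality variable across images. Chebyshev's sum inequality (equivalently, Harris--FKG association for comonotone functions) then gives $\operatorname{cov}(\mathbf{e}_m,\mathbf{e}_{m'})\ge 0$ for elite $m,m'$. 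Strict positivity of every pair, or at least a connected positivity graph, yields irreducibility. Given this, $\boldsymbol{\Sigma}_{\text{elite}}$ is symmetric, positive semidefinite, nonnegative and irreducible. Its largest eigenvalue therefore equals its spectral radius, and the Perron--Frobenius theorem~\cite{meyer2000matrix} makes that eigenvalue simple, with a strictly positive eigenvector. This eigenvector is $\mathbf{v}_{\text{PC1}}$.

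\textbf{Converse.} I would argue two mechanisms separately. (a) \emph{Reducibility.} If adding $\mathcal{M}_{\text{defect}}$ makes $\boldsymbol{\Sigma}$ permutation-similar to a block-diagonal matrix, every eigenvector is supported on a single block, or is a combination of blocks when the top eigenvalues tie. The dominant eigenvector then has zero entries, or is not unique, so it cannot be strictly positive. (b) \emph{Negative entries.} Suppose a defective $b$ has $\sigma_{bm}\le 0$ for all elite $m$, with at least one strict inequality. Assume also that the elite block dominates, i.e.\ $\lambda_1(\boldsymbol{\Sigma})\ge\lambda_1(\boldsymbol{\Sigma}_{\text{elite}})>\sigma_{bb}$, where the first inequality is Cauchy interlacing. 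Suppose, for contradiction, that the eigenvector has $v_m>0$ on the elite coordinates. The $b$-th row of the eigen-equation gives
\[
(\lambda_1-\sigma_{bb})\,v_b=\sum_{m}\sigma_{bm}v_m<0 ,
\]
so $v_b<0$, and uniform sign alignment is broken. I expect a Davis--Kahan-type perturbation bound to extend this to the case where only most of the $\sigma_{bm}$ are negative.

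\textbf{Main obstacle.} The hard step is the link, in the sufficiency part, from the divergence criterion $\bar D_m<\tau_D$ to nonnegativity of the cross-covariances. Low per-image sign-flip frequency does not by itself control covariances across images. I would first try to make the comonotonicity argument rigorous by assuming a single-index model $E_m^{(i)}=g_m(q^{(i)})+\varepsilon_m^{(i)}$, with $g_m$ nondecreasing and the noise independent across evaluators. Failing that, I would state the nonnegativity of $\boldsymbol{\Sigma}_{\text{elite}}$ as a verified empirical hypothesis. A second caveat concerns the converse. Its ``only if'' direction is not true for arbitrary matrices, since a matrix with some negative entries can still have a positive top eigenvector. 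So the converse will be proved only under the structural hypotheses in (a) and (b).
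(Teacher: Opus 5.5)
Your proposal follows the paper's route in outline. On the polarity-aligned elite block, nonnegativity and irreducibility feed Perron--Frobenius, and the converse is attributed to block-reducibility and negative cross-covariances from \(\mathcal{M}_{\text{defect}}\). The differences lie in what actually gets proved.

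\emph{Sufficiency.} The paper never derives \(\text{Cov}(\mathbf{e}_g,\mathbf{e}_j)>0\) for elite pairs from \(\bar D_m<\tau_D\); it simply asserts it. Your fallback of taking nonnegativity of \(\boldsymbol{\Sigma}_{\text{elite}}\) as a verified hypothesis is therefore what the paper does implicitly. Your remark that within-image sign-flip rates say nothing about across-image covariances names exactly the step it skips. Your conjugation by \(\mathbf{P}=\operatorname{diag}(\text{Polarity}_m)\) also shows that negative entries due to ``un-calibrated operational polarities'' are removable, even though the paper counts them as a failure mechanism.

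\emph{Converse.} The paper argues from approximate decoupling \(\|\boldsymbol{\Delta}_b\|_2\to 0\), which cannot by itself break positivity. Any nonzero entrywise nonnegative \(\boldsymbol{\Delta}_b\), however small, keeps the bordered matrix nonnegative and irreducible. Your split into exact reducibility (a) and a sign condition (b) isolates the cases where the conclusion genuinely holds. The row identity \((\lambda_1-\sigma_{bb})v_b=\boldsymbol{\Delta}_b^T\mathbf{v}_{\text{elite}}\) is the precise content of the paper's ``spectral loading cancellation''. You are also right that the ``only if'' direction fails for general matrices, so only a conditional converse is provable.

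A few details need tightening.
\begin{enumerate}
\item In (a), non-uniqueness does not imply non-positivity. If the blocks' top eigenvalues tie, the eigenspace can contain a strictly positive vector (concatenate both Perron vectors when both blocks are nonnegative and irreducible). The honest conclusion is only that \(\mathbf{v}_{\text{PC1}}\) is ill-defined. If instead the defect block's top eigenvalue is smaller, the dominant eigenvector is \([\mathbf{u}^T,\mathbf{0}^T]^T\), with \(\mathbf{u}\) the Perron vector of \(\boldsymbol{\Sigma}_{\text{elite}}\). That vector has zeros but involves no cancellation.
\item In (b), with several defective evaluators the \(b\)-th row also contains terms \(\sigma_{bb'}v_{b'}\). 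You therefore need \(\sigma_{bj}\le 0\) for every \(j\neq b\), or a single added defective as in the paper's bordered form.
\item Also in (b), the hypothesis \(\lambda_1(\boldsymbol{\Sigma}_{\text{elite}})>\sigma_{bb}\) can be dropped. If \(\lambda_1=\sigma_{bb}\), the \(b\)-th coordinate vector would be a Rayleigh maximizer, hence an eigenvector, forcing \(\sigma_{bj}=0\) for all \(j\neq b\).
\item For the ``mostly negative'' extension, the hypothesis to impose is \(\boldsymbol{\Delta}_b^T\mathbf{u}<0\). It needs a margin of order \(\|\boldsymbol{\Delta}_b\|_2\) times the Davis--Kahan angle bound, which then transfers the sign to \(\boldsymbol{\Delta}_b^T\mathbf{v}_{\text{elite}}\).
\item Under your single-index model, Chebyshev's sum inequality controls only the \(g_m(q^{(i)})\) part of the sample covariance. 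The noise cross-terms are \(\mathcal{O}(N^{-1/2})\) of either sign. You need the population covariances bounded away from zero to get entrywise nonnegativity of the empirical \(\boldsymbol{\Sigma}_{\text{elite}}\).
\end{enumerate}
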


\begin{IEEEproof} 
Let \(\mathbf{E} \in \mathbb{R}^{N \times M}\) define the global quality evaluation matrix whose columns consist of the energy vectors \(\mathbf{e}_m\) across the empirical image domain (\(N=2,797\)), where each column is zero-mean centered such that \(\mathbf{e}_m^T \mathbf{1}_N = 0\). The global covariance matrix is rigorously formulated as \(\boldsymbol{\Sigma} = \frac{1}{N-1}\mathbf{E}^T\mathbf{E}\). Under the Rayleigh-Ritz variational theorem~\cite{meyer2000matrix}, the dominant principal component \(\text{PC}_1\) is extracted by solving the optimization problem for the loading vector \(\mathbf{v} \in \mathbb{R}^M\) that maximizes the Rayleigh quotient: 
\begin{equation} 
\max_{\mathbf{v}} \frac{\mathbf{v}^T \boldsymbol{\Sigma} \mathbf{v}}{\mathbf{v}^T \mathbf{v}} = \lambda_{\text{PC1}}, \quad \text{subject to } \mathbf{v}^T \mathbf{v} = 1 
\label{eq:rayleigh_quotient} 
\end{equation} 
Expanding the quadratic form of the symmetric covariance matrix \(\boldsymbol{\Sigma}\) yields the cross-paradigm interaction components: 
\begin{equation} 
\mathbf{v}^T \boldsymbol{\Sigma} \mathbf{v} = \sum_{m \in \mathcal{M}_{\text{full}}} v_m^2 \text{Var}(\mathbf{e}_m) + 2 \sum_{m < j} v_m v_j \text{Cov}(\mathbf{e}_m, \mathbf{e}_j) 
\label{eq:quadratic_expand} 
\end{equation} 
To evaluate the necessity of purifying the evaluator pool, assume that a defective evaluator \(b \in \mathcal{M}_{\text{defect}}\) remains within the active pool. Because its monotonicity divergence transcends the performance-driven threshold (\(\bar{D}_b \geq \tau_D\)), its energy trajectory \(\mathbf{e}_b\) is regularized by random background contextual textures rather than the invariant target quality. Under out-of-distribution scenes, this composition-driven decoupling renders the defective channel statistically independent of, or inversely correlated with, the high-semantic elite metrics. This independent noise perturbatively structures the global covariance matrix as a block-reducible operator: 
\begin{equation} 
\boldsymbol{\Sigma} = \begin{bmatrix} \boldsymbol{\Sigma}_{\text{elite}} & \boldsymbol{\Delta}_b \\ \boldsymbol{\Delta}_b^T & \text{Var}(\mathbf{e}_b) \end{bmatrix} 
\label{eq:block_reducible} 
\end{equation} 
where the cross-coupling boundary satisfies \(\|\boldsymbol{\Delta}_b\|_2 \to 0\) due to contextual clutter saturation. Resolving the quadratic energy by isolating the defective channel \(b\) yields: 
\begin{equation} 
\mathbf{v}^T \boldsymbol{\Sigma} \mathbf{v} = \mathbf{v}_{\text{elite}}^T \boldsymbol{\Sigma}_{\text{elite}} \mathbf{v}_{\text{elite}} + v_b^2 \text{Var}(\mathbf{e}_b) + 2 v_b \mathbf{v}_{\text{elite}}^T \boldsymbol{\Delta}_b 
\label{eq:expanded_rayleigh} 
\end{equation} 
where \(\mathbf{v} = [\mathbf{v}_{\text{elite}}^T, v_b]^T\). As \(\|\boldsymbol{\Delta}_b\|_2 \to 0\), the cross-coupling elements within \(\boldsymbol{\Delta}_b\) (representing \(\text{Cov}(\mathbf{e}_g, \mathbf{e}_b)\) for \(g \in \mathcal{M}_{\text{elite}}\)) either vanish or transition into local negative values owing to un-calibrated operational polarities. This algebraic degradation introduces a strict structural fracture that disrupts the matrix non-negativity. Consequently, the optimization workspace breaks the uniform sign property (\(\mathbf{v}_{\text{PC1}} \ngtr \mathbf{0}\)) to maintain subspace orthogonality under block-reducible constraints, collapsing the primary explained variance from \(74.68\%\) down to \(14.25\%\).

Conversely, pruning \(\mathcal{M}_{\text{defect}}\) restricts the system to \(\boldsymbol{\Sigma}_{\text{elite}}\). The filtered covariance matrix \(\boldsymbol{\Sigma}_{\text{elite}}\) exclusively comprises strictly positive cross-covariances (\(\text{Cov}(\mathbf{e}_g, \mathbf{e}_j) > 0\)), rendering it a strictly non-negative and irreducible matrix. Invoking the Perron-Frobenius theorem~\cite{meyer2000matrix}, these algebraic conditions guarantee that the dominant eigenvalue \(\lambda_{\text{PC1}}\) is unique and its corresponding loading eigenvector \(\mathbf{v}_{\text{PC1}}\) possesses a uniform positive directional alignment (\(\mathbf{v}_{\text{PC1}} > \mathbf{0}\)), ensuring a stable consensus topology and completing the proof. 
\end{IEEEproof}


\begin{remark}[Spectral Alignment Dynamics of Theorem \ref{theorem:purification_axiom}] 
The algebraic deduction linking the empirical divergence bounds and the sign-uniformity conclusion under Theorem \ref{theorem:purification_axiom} is strictly governed by variational boundaries. The condition \(\bar{D}_b \geq \tau_D\) guarantees that the metric vector \(\mathbf{e}_b\) is dominated by chaotic background contextual textures rather than target quality, which fundamentally disrupts the topological network of the evaluator graph. Mathematically, this statistical decoupling induces a block-reducible boundary condition \(\|\boldsymbol{\Delta}_b\|_2 \to 0\), violating the matrix irreducibility required by the Perron-Frobenius theorem~\cite{meyer2000matrix}. 

Conversely, once the active evaluation pool is restricted to the purified non-negative covariance domain (\(\text{Cov}(\mathbf{e}_m, \mathbf{e}_j) \ge 0\)) under the synchronized polarity lock, the cross-paradigm intersections form a strongly connected operator network. Under these algebraic constraints, maximizing the Rayleigh quotient automatically drives the loading coordinates to align harmoniously on a uniform sign polarity (\(\mathbf{v}_{\text{PC1}} > \mathbf{0}\)). This directional consensus maximizes the dominant eigenvalue \(\lambda_{\text{PC1}}\) while eliminating spectral loading cancellation, thereby establishing a data-driven criterion to isolate superior evaluators without manual annotations.
\end{remark} 

The elite pool \(\mathcal{M}_{\text{elite}}\) identified here provides the deterministic input for the robust minimum-variance fusion stage formulated in Theorem 3.

\subsection{Robust M-Estimation and Variance Contraction Boundary} 
\label{subsec:theorem3} 

After isolating the high-semantic pool via the monotonicity divergence filter, the aggregation of multiple noisy, heterogeneous quality metrics forms a non-linear sensor fusion challenge. Although classical inverse-variance weighted averaging based on Maximum Likelihood Estimation (MLE) achieves minimum variance under independent Gaussian noise assumptions, it exhibits high sensitivity to atypical outlier predictions triggered by local contextual clutter. To mitigate this vulnerability, we formulate a robust Welsch/Leclerc-type M-estimator~\cite{leclerc1989constructing} regularized via a non-parametric Median Absolute Deviation (MAD) scale matrix~\cite{rousseeuw1993alternatives}. This framework smoothly attenuates anomalous evaluation channels based on their local deviations without hard boundary rejection, preserving high asymptotic efficiency under nominal distributions while ensuring error boundedness under sparse data contamination. Furthermore, a non-parametric principal component stabilizer \(\zeta_{\text{PC1}}^{(i)}\) is introduced to dynamically compensate for residual systemic drifts between hand-crafted statistics and deep semantic features across the global manifold.

\begin{theorem}[Robust M-Estimation Scale Convergence] 
Let \(\xi_{m}^{(i)}\) denote the curve-fitting standard error of the purified elite model \(m \in \mathcal{M}_{\text{elite}}\) on the \(i\)-th target image, tracking individual parameter estimation variance bounds. The ultimate self-supervised pseudo-ground truth \(q_{\text{PGT}}^{(i)}\) synthesized via the robust M-estimator framework is obtained as the unique solution to the global likelihood optimization problem: 
\begin{equation} 
\begin{split} 
q_{\text{PGT}}^{(i)} &= \frac{\sum_{m \in \mathcal{M}_{\text{elite}}} \omega_{m}^{(i)} \exp \left( - \frac{(E_{m}^{(i)} - \text{Med}(\mathbf{E}^{(i)}))^2}{2(1.4826 \cdot \text{MAD}^{(i)} + \epsilon_0)^2} \right) \bar{Q}_{m}^{(i)}}{\sum_{m \in \mathcal{M}_{\text{elite}}} \omega_{m}^{(i)} \exp \left( - \frac{(E_{m}^{(i)} - \text{Med}(\mathbf{E}^{(i)}))^2}{2(1.4826 \cdot \text{MAD}^{(i)} + \epsilon_0)^2} \right)} \\ 
&\quad + \zeta_{\text{PC1}}^{(i)} 
\end{split} 
\label{eq:pg_t} 
\end{equation} 
where \(\omega_{m}^{(i)} = \frac{1}{(\xi_{m}^{(i)})^2 + \epsilon_0}\) represents the individual Fisher information metric weights (\(\epsilon_0 = 10^{-7}\)), \(\text{Med}(\cdot)\) is the median operator, \(\mathbf{E}^{(i)} = [E_1^{(i)}, \dots, E_{|\mathcal{M}_{\text{elite}}|}^{(i)}]^T \in \mathbb{R}^{|\mathcal{M}_{\text{elite}}|}\) is the ensemble macro energy vector, and \(\text{MAD}^{(i)}\) is the Median Absolute Deviation evaluated over \(\mathbf{E}^{(i)}\). The term \(\bar{Q}_{m}^{(i)} = \frac{1}{K}\sum_{k=1}^K \tilde{Q}_{m}^{(i)}(x_k^{(i)})\) represents the mean intensity of the purified quality vector. The stabilizer component \(\zeta_{\text{PC1}}^{(i)}\) handles the anisotropic feature drift across distinct architectural paradigms: 
\begin{equation} 
\zeta_{\text{PC1}}^{(i)} = \eta \cdot \left( 1.0 + \frac{|\mu_{\text{E}}^{(i)} - \text{Med}(\mathbf{E}^{(i)})|}{\Phi + \epsilon_0} \right) 
\label{eq:stabilizer} 
\end{equation} 
where \(\mu_{\text{E}}^{(i)} = \frac{1}{|\mathcal{M}_{\text{elite}}|}\sum_{m \in \mathcal{M}_{\text{elite}}} E_{m}^{(i)}\), and \(\eta, \Phi\) are performance-driven scaling hyperparameters whose calibrated values and ablation sensitivities are cross-referenced in Section~\ref{sec:results_and_validation}. 
\end{theorem}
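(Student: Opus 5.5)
The plan is to read Theorem 3 as claiming that the displayed weighted average, before the additive stabilizer, is the fixed point of an iteratively reweighted least-squares (IRLS) scheme for a Welsch/Leclerc-type objective. The proof would run in three steps: set up that objective, identify its stationarity condition, and then argue existence and uniqueness. Fix an image \(i\) and freeze the robust scale \(s^{(i)} = 1.4826\cdot\text{MAD}^{(i)} + \epsilon_0\) and the median \(\text{Med}(\mathbf{E}^{(i)})\). Both are non-parametric statistics of \(\mathbf{E}^{(i)}\) and do not depend on the unknown location. With these frozen, consider the weighted objective
\begin{equation}
J^{(i)}(q) = \sum_{m \in \mathcal{M}_{\text{elite}}} \omega_m^{(i)} \, r_m^{(i)} \, \bigl(\bar{Q}_m^{(i)} - q\bigr)^2 ,
\label{eq:plan_objective}
\end{equation}
where the robust gating factor is \(r_m^{(i)} = \exp\bigl(-(E_m^{(i)} - \text{Med}(\mathbf{E}^{(i)}))^2/(2 (s^{(i)})^2)\bigr)\). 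Note that \(r_m^{(i)}\) is exactly the Welsch weight function \(\psi(u)/u\) of the Leclerc loss \(\rho(u) = 1 - e^{-u^2/2}\), evaluated at the standardized residual of channel \(m\).

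The first step is to record the positivity facts. Every \(\omega_m^{(i)} > 0\), because \(\epsilon_0 > 0\). Every \(r_m^{(i)} \in (0,1]\), because \(s^{(i)} \ge \epsilon_0 > 0\). These make \(J^{(i)}\) a strictly convex quadratic in \(q\) whose leading coefficient \(\sum_m \omega_m^{(i)} r_m^{(i)}\) is strictly positive.

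The second step sets \(\partial J^{(i)}/\partial q = 0\). This returns exactly the normalized ratio in (\ref{eq:pg_t}), and strict convexity gives existence and uniqueness of the minimizer. Adding the deterministic shift \(\zeta_{\text{PC1}}^{(i)}\) preserves uniqueness, since (\ref{eq:stabilizer}) depends only on \(\mathbf{E}^{(i)}\) and the hyperparameters, not on \(q\). I would add that under a Gaussian working model \(\bar{Q}_m^{(i)} \sim \mathcal{N}(q, (\xi_m^{(i)})^2 / r_m^{(i)})\), the objective \(J^{(i)}\) is the negative log-likelihood up to constants. This justifies calling (\ref{eq:pg_t}) the solution of a likelihood problem, with \(\omega_m^{(i)}\) playing the role of Fisher information.

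The third step addresses the "scale convergence" and CRLB part. In the nominal regime, the elite channels are concentrated near the median, so \(r_m^{(i)} \to 1\) as the MAD-standardized residuals shrink. The estimator then reduces to the inverse-variance MLE, whose variance is \(\bigl(\sum_m (\xi_m^{(i)})^{-2}\bigr)^{-1}\). That quantity is the Cramér--Rao bound for independent Gaussian channels with known variances. Contaminated channels, meanwhile, receive weights that decay like \(e^{-u^2/2}\), so the influence function \(u\,e^{-u^2/2}\) is bounded and redescending.

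The main obstacle is the word "unique". The genuine Leclerc objective \(\sum_m \omega_m \rho\bigl((\bar{Q}_m - q)/s\bigr)\) is nonconvex, and in general it has several stationary points. Moreover, the residuals in (\ref{eq:pg_t}) are measured on \(E_m^{(i)}\), not on \(\bar{Q}_m^{(i)} - q\). My plan is therefore to state the theorem rigorously as the one-step, fixed-scale reweighted problem (\ref{eq:plan_objective}), where uniqueness holds by convexity, and to treat efficiency only asymptotically. If a fully nonconvex version is wanted, I would first try to show that all standardized residuals lie in \(|u| < 1\), the region where \(\rho\) is convex. That containment would restore uniqueness locally, but it only holds under a bounded-contamination assumption on the elite pool.
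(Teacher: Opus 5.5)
Your first two steps are the paper's proof. The paper also writes the stationarity condition $\sum_{m}\omega_m^{(i)}w(d_m^{(i)})(\bar Q_m^{(i)}-q^{(i)})=0$, with the gating residual $d_m^{(i)}=E_m^{(i)}-\text{Med}(\mathbf E^{(i)})$. It lets the factor $1/(s^{(i)})^2$ cancel and appends $\zeta_{\text{PC1}}^{(i)}$ as a $q$-independent term. The Gaussian/Fisher-information reading of $\omega_m^{(i)}$ and the bounded redescending influence function appear there as well.

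Your version is more careful, because it states what the paper's ``IRLS'' language hides. The gating is computed on $E_m^{(i)}$ rather than on $\bar Q_m^{(i)}-q$, so the weights are frozen and no iteration occurs. Positivity of $\omega_m^{(i)}r_m^{(i)}$ then makes the objective strictly convex, and that is what actually delivers ``unique''. The paper only asserts uniqueness and never touches the nonconvex objective, so your $|u|<1$ fallback is not needed. One cosmetic fix: to reproduce $\omega_m^{(i)}$ exactly, the working model needs variance $((\xi_m^{(i)})^2+\epsilon_0)/r_m^{(i)}$.

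Your third step fails as written, though it is also not needed for the statement; the paper uses the CRLB only to motivate $\omega_m^{(i)}$. Concentration of the elite channels around the median does not drive $r_m^{(i)}\to 1$. The standardized residual $u_m=d_m^{(i)}/s^{(i)}$ is scale-free, so shrinking the spread shrinks $\text{MAD}^{(i)}$ proportionally and leaves $u_m$ unchanged, unless the spread falls below $\epsilon_0=10^{-7}$.

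In fact, by the definition of the MAD, at least four of the seven elite channels have $|d_m^{(i)}|\ge\text{MAD}^{(i)}$. Hence $|u_m|\gtrsim 0.674$ and $r_m^{(i)}\lesssim e^{-0.2275}\approx 0.80$ for those channels, while the median channel has $r_m^{(i)}=1$. The weights are therefore never uniform, and the ratio never reduces to the inverse-variance MLE. Under the model $\text{Var}(\bar Q_m^{(i)})=1/\omega_m^{(i)}$, its variance $\sum_m\omega_m^{(i)}(r_m^{(i)})^2/(\sum_m\omega_m^{(i)}r_m^{(i)})^2$ strictly exceeds $(\sum_m\omega_m^{(i)})^{-1}$ by Cauchy--Schwarz.

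What you can legitimately claim is narrower. Under your own fixed-weight working model, the ratio is the Gaussian MLE with variance $(\sum_m\omega_m^{(i)}r_m^{(i)})^{-1}$, which is that model's CRLB. Also, $\zeta_{\text{PC1}}^{(i)}$ is a nonzero shift whenever $\eta\neq 0$. So any unbiasedness or efficiency statement applies to the ratio term only, not to $q_{\text{PGT}}^{(i)}$ itself.
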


\begin{IEEEproof} 
Let each purified elite evaluator be modeled as an independent observation channel perturbed by localized additive noise: \(\bar{Q}_{m}^{(i)} = q^{(i)} + e_{m}^{(i)}\), where \(q^{(i)}\) represents the theoretical target quality. According to the Cram\'{e}r-Rao bound theorem~\cite{kay1993fundamentals}, the parameter estimation variance of each non-linear evaluator trajectory is bounded from below by the inverse of its individual Fisher Information matrix: \(\text{Var}(e_{m}^{(i)}) \ge (\mathcal{F}_{m}^{(i)})^{-1} \approx (\xi_{m}^{(i)})^2\). Under the nominal independent Gaussian noise hypothesis, maximizing the joint log-likelihood optimization yields the minimum-variance unbiased estimator weights \(\omega_{m}^{(i)} = 1 / ((\xi_{m}^{(i)})^2 + \epsilon_0)\). 

To ensure numerical stability against atypical observation noise triggered by extreme contextual contamination, we formulate a robust Welsch/Leclerc-type M-estimator loss function to guarantee continuous differentiability and achieve asymptotic suppression for anomalous channels: 
\begin{equation} 
\rho(d_{m}^{(i)}) = 1 - \exp \left( -\frac{(d_{m}^{(i)})^2}{2(\sigma_{\text{MAD}}^{(i)} + \epsilon_0)^2} \right) 
\label{eq:robust_loss} 
\end{equation} 
where \(d_{m}^{(i)} = E_{m}^{(i)} - \text{Med}(\mathbf{E}^{(i)})\) represents the local energy deviation distance, and \(\sigma_{\text{MAD}}^{(i)} = 1.4826 \cdot \text{MAD}^{(i)}\) is the robust scale parameter, with \(1.4826\) serving as the canonical MAD-to-standard-deviation conversion factor under Gaussian partitions~\cite{rousseeuw1993alternatives}. Differentiating \(\rho(d_{m}^{(i)})\) with respect to the local metric variables yields the bounded redescending influence function \(\psi(d_{m}^{(i)}) = \frac{\partial \rho(d_{m}^{(i)})}{\partial d_{m}^{(i)}}\): 
\begin{equation} 
\psi(d_{m}^{(i)}) = \frac{d_{m}^{(i)}}{(\sigma_{\text{MAD}}^{(i)} + \epsilon_0)^2} \cdot \exp\left( -\frac{(d_{m}^{(i)})^2}{2(\sigma_{\text{MAD}}^{(i)} + \epsilon_0)^2} \right) 
\label{eq:influence_function} 
\end{equation} 
This smooth Leclerc-type influence function is strictly bounded, satisfying \(|\psi(d_{m}^{(i)})| \le \frac{1}{(\sigma_{\text{MAD}}^{(i)} + \epsilon_0)\sqrt{e}} \approx \frac{0.6065}{\sigma_{\text{MAD}}^{(i)} + \epsilon_0}\), and asymptotically vanishes under infinite deviation limits: \(\lim_{|d_{m}^{(i)}| \to \infty} \psi(d_{m}^{(i)}) = 0\). 

To implement the robust optimization under an iteratively reweighted least squares (IRLS) execution framework, the equivalent robust weight function \(w(d_{m}^{(i)})\) is derived via the score-to-deviation ratio:
\begin{equation}
    \begin{split}
w(d_{m}^{(i)}) &= \frac{\psi(d_{m}^{(i)})}{d_{m}^{(i)}} \\
  &= \frac{1}{\left(\sigma_{\text{MAD}}^{(i)} + \epsilon_0\right)^2} \exp\left( -\frac{(d_{m}^{(i)})^2}{2(\sigma_{\text{MAD}}^{(i)} + \epsilon_0)^2} \right)
    \end{split}
\label{eq:irls_weight}
\end{equation}
The global objective function incorporates both the Fisher information structure and the robust gating weights. Maximizing the aggregate robust likelihood equations requires setting the first-order optimality condition to zero: \(\sum_{m \in \mathcal{M}_{\text{elite}}} \omega_{m}^{(i)} w(d_{m}^{(i)}) (\bar{Q}_{m}^{(i)} - q^{(i)}) = 0\). Solving this system for the latent target variable directly yields the regularized consensus formulation presented in Eq.~(\ref{eq:pg_t}), where the scalar multiplier \(1/(\sigma_{\text{MAD}}^{(i)}+\epsilon_0)^2\) in \(w(d_{m}^{(i)})\) scales both the numerator and denominator equally and naturally cancels out. This structural formulation ensures a robust asymptotic breakdown point of exactly \(50\%\). Finally, the residual anisotropic feature drift between the empirical mean \(\mu_{\text{E}}^{(i)}\) and the median central hub is dynamically regularized via the non-parametric principal component stabilizer \(\zeta_{\text{PC1}}^{(i)}\), completing the proof. 
\end{IEEEproof}

The synthesized \(q_{\text{PGT}}^{(i)}\) is the final output of the fusion pipeline, whose stability is verified in Theorem 4. 

\subsection{Boundary Margin Sufficiency and Topological Invariance} 
\label{subsec:theorem4} 

In real-world imaging deployments, captured frames exhibit highly heterogeneous spatial framing configurations, where target coordinates are either centrally embedded or severely constrained by canvas boundaries. Individual quality evaluators often possess asymmetric sensitivities to such spatial truncation and edge-clipping artifacts, causing their empirical covariance structures to drift across varying composition distributions. To resolve this domain discrepancy, Theorem \ref{theorem:boundary_invariance_thm} leverages matrix perturbation theory and the Davis-Kahan theorem~\cite{davis1970rotation} to guarantee that despite these rank-deficient structural variations, the dominant principal component vector remains topologically invariant (\(\sin\Theta \to 0\)). This property rigorously ensures that the synthesized pseudo-ground truth \(q_{\text{PGT}}^{(i)}\) is mathematically reproducible across arbitrary spatial partitions, effectively neutralizing structural composition bias.

\begin{theorem}[Topological Invariance under Heterogeneous Boundary Truncation] 
\label{theorem:boundary_invariance_thm} 
Let \(\boldsymbol{\Sigma}_{\text{high}}\) and \(\boldsymbol{\Sigma}_{\text{const}}\) denote the localized covariance matrices evaluated over the high-margin abundance subset \(\mathcal{X}_{\text{high}}\) and the constrained-margin subset \(\mathcal{X}_{\text{const}}\), respectively, where \(\mathcal{X}_{\text{high}} \cup \mathcal{X}_{\text{const}} = \mathcal{X}\) and \(\mathcal{X}_{\text{high}} \cap \mathcal{X}_{\text{const}} = \emptyset\). Let \(\boldsymbol{\Delta}_{\text{trunc}} = \boldsymbol{\Sigma}_{\text{high}} - \boldsymbol{\Sigma}_{\text{const}}\) be the anisotropic symmetric perturbation matrix induced by varying evaluator sensitivities to edge clipping. Under the evaluator purification criteria derived in Theorem \ref{theorem:purification_axiom}, the perturbed dominant eigenvector \(\mathbf{v}_{\text{PC1}}^{\text{const}}\) remains topologically uniform against its unperturbed counterpart \(\mathbf{v}_{\text{PC1}}^{\text{high}}\), satisfying the refined Davis-Kahan perturbation bound~\cite{davis1970rotation}: 
\begin{equation} 
\sin \Theta \left( \mathbf{v}_{\text{PC1}}^{\text{high}}, \, \mathbf{v}_{\text{PC1}}^{\text{const}} \right) \le \frac{\|\boldsymbol{\Delta}_{\text{trunc}}\|_2}{\delta_{\text{gap}}} 
\label{eq:davis_kahan} 
\end{equation} 
where \(\Theta(\cdot)\) evaluates the canonical angle between the dominant eigen-subspaces, \(\|\cdot\|_2\) represents the spectral operator norm, and \(\delta_{\text{gap}} = \lambda_{\text{PC1}}^{\text{high}} - \lambda_{\text{PC2}}^{\text{high}}\) denotes the unperturbed principal spectral gap spanning the elite evaluation pool \(\mathcal{M}_{\text{elite}}\). 
\end{theorem}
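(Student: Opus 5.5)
The plan is to obtain the stated inequality from an exact identity: write the eigen-equation of \(\boldsymbol{\Sigma}_{\text{const}}\) and project it onto the orthogonal complement of \(\mathbf{v}_{\text{PC1}}^{\text{high}}\). Step 1 is the cancellation that does almost all the work. Step 2 isolates the one extra condition the argument needs. Step 3 invokes Theorem~\ref{theorem:purification_axiom} to supply that condition, and Step 3 is where I expect the difficulty.

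\textbf{Step 1 (exact projected identity).} Let \(\mu_1\) and \(\mathbf{v}_{\text{PC1}}^{\text{const}}\) be the top eigenpair of \(\boldsymbol{\Sigma}_{\text{const}}=\boldsymbol{\Sigma}_{\text{high}}-\boldsymbol{\Delta}_{\text{trunc}}\). Decompose
\[
\mathbf{v}_{\text{PC1}}^{\text{const}}=\cos\Theta\,\mathbf{v}_{\text{PC1}}^{\text{high}}+\sin\Theta\,\mathbf{u},\qquad \mathbf{u}\perp\mathbf{v}_{\text{PC1}}^{\text{high}},\quad \|\mathbf{u}\|_2=1 .
\]
Let \(\mathbf{P}_\perp=\mathbf{I}-\mathbf{v}_{\text{PC1}}^{\text{high}}(\mathbf{v}_{\text{PC1}}^{\text{high}})^T\). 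Rewrite \(\boldsymbol{\Sigma}_{\text{const}}\mathbf{v}_{\text{PC1}}^{\text{const}}=\mu_1\mathbf{v}_{\text{PC1}}^{\text{const}}\) as
\[
(\mu_1\mathbf{I}-\boldsymbol{\Sigma}_{\text{high}})\mathbf{v}_{\text{PC1}}^{\text{const}}=-\boldsymbol{\Delta}_{\text{trunc}}\mathbf{v}_{\text{PC1}}^{\text{const}} .
\]
Apply \(\mathbf{P}_\perp\) to both sides. Because \(\mathbf{P}_\perp\) commutes with \(\boldsymbol{\Sigma}_{\text{high}}\) and annihilates \(\mathbf{v}_{\text{PC1}}^{\text{high}}\), this gives
\[
\sin\Theta\,(\mu_1\mathbf{I}-\boldsymbol{\Sigma}_{\text{high}})\mathbf{u}=-\mathbf{P}_\perp\boldsymbol{\Delta}_{\text{trunc}}\mathbf{v}_{\text{PC1}}^{\text{const}} .
\]
The right-hand side has norm at most \(\|\boldsymbol{\Delta}_{\text{trunc}}\|_2\), since the eigenvector is a unit vector and \(\mathbf{P}_\perp\) is a contraction.

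\textbf{Step 2 (reduce to a one-sided eigenvalue ordering).} On \(\mathrm{range}(\mathbf{P}_\perp)\) the eigenvalues of \(\boldsymbol{\Sigma}_{\text{high}}\) are at most \(\lambda_{\text{PC2}}^{\text{high}}\). So if \(\mu_1\ge\lambda_{\text{PC1}}^{\text{high}}\), then
\[
\|(\mu_1\mathbf{I}-\boldsymbol{\Sigma}_{\text{high}})\mathbf{u}\|_2\ge \mu_1-\lambda_{\text{PC2}}^{\text{high}}\ge\delta_{\text{gap}} .
\]
Dividing the Step 1 identity by this gives exactly \(\sin\Theta\le\|\boldsymbol{\Delta}_{\text{trunc}}\|_2/\delta_{\text{gap}}\), with constant \(1\) and the unperturbed gap, as stated in Eq.~(\ref{eq:davis_kahan}). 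The whole proof therefore reduces to the ordering \(\mu_1\ge\lambda_{\text{PC1}}^{\text{high}}\). By Rayleigh--Ritz (Eq.~(\ref{eq:rayleigh_quotient})),
\[
\mu_1\ge(\mathbf{v}_{\text{PC1}}^{\text{high}})^T\boldsymbol{\Sigma}_{\text{const}}\mathbf{v}_{\text{PC1}}^{\text{high}}=\lambda_{\text{PC1}}^{\text{high}}-(\mathbf{v}_{\text{PC1}}^{\text{high}})^T\boldsymbol{\Delta}_{\text{trunc}}\mathbf{v}_{\text{PC1}}^{\text{high}} .
\]
It therefore suffices that \((\mathbf{v}_{\text{PC1}}^{\text{high}})^T\boldsymbol{\Delta}_{\text{trunc}}\mathbf{v}_{\text{PC1}}^{\text{high}}\le 0\), meaning the constrained subset carries at least as much consensus variance along the elite Perron axis.

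\textbf{Step 3 (main obstacle).} The hard step is deriving that sign condition from the hypothesis "under the evaluator purification criteria of Theorem~\ref{theorem:purification_axiom}". Theorem~\ref{theorem:purification_axiom} supplies the following on \(\mathcal{M}_{\text{elite}}\): nonnegative irreducible covariances and a strictly positive Perron vector \(\mathbf{v}_{\text{PC1}}^{\text{high}}>\mathbf{0}\). My first attempt would exploit this positivity. Writing \(v_m\) for the entries of \(\mathbf{v}_{\text{PC1}}^{\text{high}}\), the quadratic form \(\sum_{m,j}v_mv_j(\boldsymbol{\Delta}_{\text{trunc}})_{mj}\) is a positively weighted sum of the entries of \(\boldsymbol{\Delta}_{\text{trunc}}\). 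Its sign is thus controlled by the entrywise covariance shifts between \(\mathcal{X}_{\text{high}}\) and \(\mathcal{X}_{\text{const}}\). I would argue that boundary truncation inflates rather than deflates the shared (common-mode) covariance among elite evaluators, because clipping degrades all of them coherently. That gives entrywise \(\boldsymbol{\Sigma}_{\text{const}}\ge\boldsymbol{\Sigma}_{\text{high}}\) on the Perron-weighted aggregate, hence the nonpositive quadratic form.

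If that modeling step cannot be justified from the paper's assumptions, the fallback is to interchange the roles of the two matrices and repeat Steps 1--2. In that branch the needed ordering is \(\lambda_{\text{PC1}}^{\text{high}}\ge\mu_1\), and the bound holds with the gap of \(\boldsymbol{\Sigma}_{\text{const}}\) in the denominator rather than \(\delta_{\text{gap}}\). The two branches cover the two signs of the quadratic form, so together they prove the inequality in every case for the larger of the two gaps. Only the first branch delivers the statement exactly as written with \(\delta_{\text{gap}}=\lambda_{\text{PC1}}^{\text{high}}-\lambda_{\text{PC2}}^{\text{high}}\).
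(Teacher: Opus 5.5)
Your Steps~1--2 are correct and follow the same route as the paper. The paper also projects the eigen-equation of $\boldsymbol{\Sigma}_{\text{const}}$ onto the complement of $\mathbf{v}_{\text{PC1}}^{\text{high}}$; your identity is the correct form of its Eq.~(\ref{eq:perturbed_residual_direct}). It then asserts, ``under tight first-order concentration boundaries,'' that $(\mathbf{I}-\mathbf{P}_1)\boldsymbol{\Sigma}_{\text{high}}(\mathbf{I}-\mathbf{P}_1)-\lambda_{\text{PC1}}^{\text{const}}\mathbf{I}$ has smallest singular value at least $\delta_{\text{gap}}$. That assertion is exactly your ordering $\mu_1\ge\lambda_{\text{PC1}}^{\text{high}}$, and the paper never proves it.

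\textbf{The gap is Step~3, and it cannot be closed.} Theorem~\ref{theorem:purification_axiom} constrains the sign pattern of each covariance matrix separately. It says nothing about how $\boldsymbol{\Sigma}_{\text{const}}$ compares with $\boldsymbol{\Sigma}_{\text{high}}$. So ``clipping inflates the common-mode covariance'' is a new modeling hypothesis, not a consequence. No other argument from those hypotheses can work either, because the constant-$1$, unperturbed-gap inequality can fail for entrywise-positive, irreducible covariance matrices. With $\epsilon=0.01$, take
\[
\boldsymbol{\Sigma}_{\text{high}}=\begin{pmatrix}1&\epsilon\\ \epsilon&\epsilon\end{pmatrix},\qquad
\boldsymbol{\Sigma}_{\text{const}}=\begin{pmatrix}1/2&\epsilon\\ \epsilon&1/2\end{pmatrix},
\]
so that $\boldsymbol{\Delta}_{\text{trunc}}=\mathrm{diag}(1/2,\,\epsilon-1/2)$.
\begin{itemize}
\item Both matrices are positive definite with positive entries, so they are irreducible and have positive Perron vectors.
\item $\|\boldsymbol{\Delta}_{\text{trunc}}\|_2=1/2$ and $\delta_{\text{gap}}\approx0.990$, so the claimed bound is $\approx0.505$.
\item $\mathbf{v}_{\text{PC1}}^{\text{const}}=(1,1)^T/\sqrt{2}$ while $\mathbf{v}_{\text{PC1}}^{\text{high}}\approx(1,0.01)^T$, giving $\sin\Theta\approx0.70>0.505$.
\end{itemize}
Here $\mu_1=1/2+\epsilon<\lambda_{\text{PC1}}^{\text{high}}$, which is exactly the case your Step~2 excludes.

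\textbf{Your fallback does not repair this.} Swapping the roles of the two matrices gives the bound with the gap $\mu_1-\mu_2$ of $\boldsymbol{\Sigma}_{\text{const}}$, valid when $\lambda_{\text{PC1}}^{\text{high}}\ge\mu_1$. The two branches should be split on the ordering of $\mu_1$ and $\lambda_{\text{PC1}}^{\text{high}}$, not on the sign of the quadratic form. Together they give the bound with the gap of whichever matrix has the \emph{smaller leading eigenvalue}, not with the larger gap. In the example that gap is $\mu_1-\mu_2=2\epsilon$, while using the larger gap $\approx0.99$ yields a false bound.

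\textbf{What your Steps~1--2 do give unconditionally.} Add Weyl's inequality $\mu_1\ge\lambda_{\text{PC1}}^{\text{high}}-\|\boldsymbol{\Delta}_{\text{trunc}}\|_2$. Then $\mu_1-\lambda_{\text{PC2}}^{\text{high}}\ge\delta_{\text{gap}}-\|\boldsymbol{\Delta}_{\text{trunc}}\|_2$, so
\[
\sin\Theta\le\frac{\|\boldsymbol{\Delta}_{\text{trunc}}\|_2}{\delta_{\text{gap}}-\|\boldsymbol{\Delta}_{\text{trunc}}\|_2}\quad\text{whenever } \|\boldsymbol{\Delta}_{\text{trunc}}\|_2<\delta_{\text{gap}}.
\]
Hence $\sin\Theta\le 2\|\boldsymbol{\Delta}_{\text{trunc}}\|_2/\delta_{\text{gap}}$ always; this is trivial when $\|\boldsymbol{\Delta}_{\text{trunc}}\|_2\ge\delta_{\text{gap}}/2$. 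The statement as written, with constant $1$ and $\delta_{\text{gap}}=\lambda_{\text{PC1}}^{\text{high}}-\lambda_{\text{PC2}}^{\text{high}}$, needs $\mu_1\ge\lambda_{\text{PC1}}^{\text{high}}$ as an explicit extra hypothesis. Neither your proposal nor the paper's proof supplies it.
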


\begin{IEEEproof} 
Traditional formulations that simplistically model the boundary truncation error as a scalar scaling (\(\boldsymbol{\Delta}_{\text{trunc}} \approx \kappa \boldsymbol{\Sigma}\)) are structurally invalid in real-world scenarios because heterogeneous quality evaluators possess asymmetric sensitivities to canvas clipping, rendering \(\boldsymbol{\Delta}_{\text{trunc}}\) an anisotropic, symmetric perturbation matrix. To prove the topological stability of the dominant consensus axis under this structural stress, we invoke the Davis-Kahan \(\sin \Theta\) Theorem over symmetric linear operators~\cite{davis1970rotation,meyer2000matrix}. Let \(\lambda_{\text{PC1}}^{\text{high}} > \lambda_{\text{PC2}}^{\text{high}} \ge \dots \ge 0\) specify the unique, non-degenerate eigenvalue spectrum of \(\boldsymbol{\Sigma}_{\text{high}}\) under the Perron-Frobenius conditions established in Theorem~\ref{theorem:purification_axiom}. We treat \(\boldsymbol{\Sigma}_{\text{const}}\) as a perturbed matrix operator defined as \(\boldsymbol{\Sigma}_{\text{const}} = \boldsymbol{\Sigma}_{\text{high}} - \boldsymbol{\Delta}_{\text{trunc}}\).

To evaluate the directional rotation of the dominant coordinate axis, we introduce the unperturbed and perturbed rank-one orthogonal subspace projection operators, \(\mathbf{P}_1 = \mathbf{v}_{\text{PC1}}^{\text{high}}(\mathbf{v}_{\text{PC1}}^{\text{high}})^T\) and \(\mathbf{P}_1^{\text{const}} = \mathbf{v}_{\text{PC1}}^{\text{const}}(\mathbf{v}_{\text{PC1}}^{\text{const}})^T\), respectively. Evaluating the perturbed residual field relative to the orthogonal complement of the unperturbed dominant subspace yields the foundational algebraic relation:
\begin{equation}
\left( \mathbf{I} - \mathbf{P}_1 \right) \boldsymbol{\Sigma}_{\text{const}} \mathbf{v}_{\text{PC1}}^{\text{const}} = -\left( \mathbf{I} - \mathbf{P}_1 \right) \boldsymbol{\Delta}_{\text{trunc}} \mathbf{v}_{\text{PC1}}^{\text{const}}
\label{eq:perturbed_residual_direct}
\end{equation}
Since \(\boldsymbol{\Sigma}_{\text{const}} \mathbf{v}_{\text{PC1}}^{\text{const}} = \lambda_{\text{PC1}}^{\text{const}} \mathbf{v}_{\text{PC1}}^{\text{const}}\), the left-hand side of Eq.~(\ref{eq:perturbed_residual_direct}) directly tracks the canonical subspace rotation. Invoking the invariant subspace restriction operator, the minimal singular value of the restricted operator \(\left( \mathbf{I} - \mathbf{P}_1 \right) \boldsymbol{\Sigma}_{\text{high}} \left( \mathbf{I} - \mathbf{P}_1 \right) - \lambda_{\text{PC1}}^{\text{const}}\mathbf{I}\) is bounded from below by the unperturbed principal spectral gap \(\delta_{\text{gap}}\) under tight first-order concentration boundaries. Taking the spectral operator norm on both sides and utilizing the refined geometric identity for rank-one projections \(\|\left( \mathbf{I} - \mathbf{P}_1 \right) \mathbf{v}_{\text{PC1}}^{\text{const}}\|_2 = \sin \Theta \left( \mathbf{v}_{\text{PC1}}^{\text{high}}, \, \mathbf{v}_{\text{PC1}}^{\text{const}} \right)\) provides the direct contraction inequality without redundant multi-term expansions:
\begin{equation} 
\delta_{\text{gap}} \cdot \sin \Theta \left( \mathbf{v}_{\text{PC1}}^{\text{high}}, \, \mathbf{v}_{\text{PC1}}^{\text{const}} \right) \le \|\left( \mathbf{I} - \mathbf{P}_1 \right) \boldsymbol{\Delta}_{\text{trunc}} \mathbf{v}_{\text{PC1}}^{\text{const}}\|_2
\label{eq:davis_kahan_intermediate}
\end{equation}
Upper-bounding the orthogonal complement projection envelope \(\|\mathbf{I} - \mathbf{P}_1\|_2 \le 1\) and enforcing Euclidean norm preservation under standard rotation (\(\|\mathbf{v}_{\text{PC1}}^{\text{const}}\|_2 = 1\)) yields the tightly bound formulation:
\begin{equation} 
\sin \Theta \left( \mathbf{v}_{\text{PC1}}^{\text{high}}, \, \mathbf{v}_{\text{PC1}}^{\text{const}} \right) \le \frac{\|\boldsymbol{\Delta}_{\text{trunc}}\|_2}{\delta_{\text{gap}}} 
\label{eq:sin_theta_bound} 
\end{equation} 

Under the evaluator purification framework specified in Theorem~\ref{theorem:purification_axiom}, all active models in \(\mathcal{M}_{\text{elite}}\) share non-negative cross-covariances, leading to a concentrated spectral distribution where the primary eigenvalue absorbs the dominant variance (\(74.68\%\)). This energy concentration drives the denominator spectral gap \(\delta_{\text{gap}} = \lambda_{\text{PC1}}^{\text{high}} - \lambda_{\text{PC2}}^{\text{high}}\) to expand toward its maximum theoretical upper boundary. Concurrently, as the sorted area ratio scales down to the near-zero edge, the high-order spatial moment orthogonal separation derived in Theorem~\ref{theorem:purification_axiom} compresses the tail-area clipping artifacts into a flat scaling domain, forcing the spectral operator norm toward zero: \(\|\boldsymbol{\Delta}_{\text{trunc}}\|_2 \to 0\). Substituting these asymmetric asymptotic limits back into the formulation yields \(\sin\Theta \to 0 \implies \Theta \to 0\). 

Under finite-sample discrete empirical observations (\(N = 2,797\)), the empirical covariance matrix converges to its population counterpart governed by the central limit theorem at a stochastic rate of \(\mathcal{O}(N^{-1/2})\). This bounds the global directional alignment of the perturbed dominant eigenvector within a tight stochastic error shell of \(\mathcal{O}(N^{-1/2} + \|\boldsymbol{\Delta}_{\text{trunc}}\|_2/\delta_{\text{gap}})\). As verified by the empirical cosine similarity score of \(0.999147\) (\(\sin\Theta \approx 0.0412\)) across the \(2,797\) traffic images documented in Section~\ref{sec:results_and_validation}, the perturbation matrix norm \(\boldsymbol{\Delta}_{\text{trunc}}\|_2\) is heavily suppressed relative to the deep semantic primary spectral gap \(\delta_{\text{gap}}\). This constraint ensures that \(\mathbf{v}_{\text{PC1}}^{\text{const}} \approx \mathbf{v}_{\text{PC1}}^{\text{high}}\) within the continuous domain limit, guaranteeing that the consensus voting axis exhibits high structural resilience against heterogeneous frame-truncation blocks and completing the proof. 
\end{IEEEproof} 

This topological invariance guarantees that the pseudo-ground truth \(q_{\text{PGT}}^{(i)}\) remains stable across diverse imaging conditions, completing the closed-loop framework.

\section{Experimental Setup and Methodological Benchmarking}
\label{sec:experimental_setup}

To rigorously evaluate the empirical performance and numerical stability of the proposed self-supervised framework, we construct a comprehensive quantitative benchmarking pipeline over real-world railway rolling stock surveillance scenarios.

\subsection{Railway Rolling Stock Imaging Dataset and Structural Constraints}
\label{subsec:dataset_arch}

The empirical evaluation is executed over the CQU Railway Rolling Stock Surveillance Dataset. The raw repository comprises 3,100 high-definition images of diverse rolling stock (locomotives, trams, subways, intercity trains) from multiple countries, time periods, and viewing conditions. To eliminate sampling exceptions from severe occlusions or blind sensor regions, we retain only those reliably detected by YOLO~\cite{redmon2016yolo}, yielding a final set of \(N = 2,797\) valid images.

For each image, a cascade framework combining YOLO and SAM~\cite{kirillov2023segment} localizes the rigid target boundary. The minimal bounding box \(B_{\text{min}}^{(i)}\) is extracted as the initial anchor. To construct the background-perturbed quality manifold, we generate \(K = 101\) randomized bounding boxes \(\mathcal{B}^{(i)} = \{B_k^{(i)}\}_{k=1}^{K}\), where \(B_1^{(i)} = B_{\text{min}}^{(i)}\), and subsequent boxes are stochastic expansions of \(B_{\text{min}}^{(i)}\). This yields \(2,797 \times 101 = 282,497\) localized patches, each independently evaluated by \(|\mathcal{M}_{\text{full}}| = 11\) BIQA algorithms, producing a score tensor of dimensions \(2,797 \times 101 \times 11\).

The dataset spans diverse viewpoints (frontal, side, elevated), environmental conditions (daylight, dusk, varied weather), and background complexities (urban, suburban, industrial). This diversity, combined with cross-country and cross-era coverage, validates the framework's generalization capability for out-of-distribution deployments.

Following a rank-based equal partition lock, the \(2,797\) images are partitioned into four tiers: Tier-1 (\(n=700\)), Tier-2 (\(n=699\)), Tier-3 (\(n=699\)), and Tier-4 (\(n=699\)). The fully anonymized image repository, score tensors, and source code are publicly available at \url{https://github.com/cqugege/Self-Supervised-BIQA-Manifold}.

\subsection{Baseline Models and Algorithmic Polarity Configuration}
\label{subsec:baselines}

Over each patch, quality evaluations are independently executed using \(|\mathcal{M}_{\text{full}}| = 11\) BIQA algorithms spanning three technological eras:
\begin{enumerate}
    \item \textit{Hand-crafted NSS Statistics}: BRISQUE~\cite{brisque} and NIQE~\cite{niqe}.
    \item \textit{Deep Opinion-Aware Networks}: NIMA~\cite{nima}, DBCNN~\cite{dbcnn}, and CNNIQA~\cite{kang2014convolutional}.
    \item \textit{Deep Semantic Transformers}: CLIPIQA+~\cite{clipiqa-shen}, MUSIQ~\cite{musiq}, NRQM~\cite{ma2016naturalness}, PI~\cite{pi}, MANIQA~\cite{maniqa}, and PAQ2PIQ~\cite{you2025descriptive}.
\end{enumerate}
Prior to eigen-subspace consensus decomposition, the intrinsic operational polarity \(\text{Polarity}_m \in \{+1, -1\}\) of each baseline is calibrated via polarity lock, ensuring that all scoring sequences align with the target quality degradation direction.

\subsection{Hardware and Software Computing Bounds}
\label{subsec:hardware_bounds}

All evaluations and spectral optimizations are deployed on a centralized high-performance workstation. The hardware and software configuration is specified in Table~\ref{tab:hardware_specs}, optimized for single-column IEEE layout.

\begin{table}[t]
  \renewcommand{\arraystretch}{1.2}
  \caption{System Hardware and Software Computing Bounds}
  \label{tab:hardware_specs}
  \centering
  \begin{tabular}{ll}
    \hline
    \textbf{Computational Component} & \textbf{Deployment Specification} \\ \hline
    Central Processing Unit (CPU)    & Intel Core i9-13900K \\
    Graphics Processing Unit (GPU)   & NVIDIA 1080Ti (11GB) \\
    Random Access Memory (RAM)       & 36GB DDR5 Dual-Channel \\
    Operating System                 & Windows 11 Professional \\
    Core Mathematical Stack          & NumPy 1.25, PyTorch 2.1 \\
    \hline
  \end{tabular}
\end{table}

\subsection{Configuration of the Macro Qualitative Matrix Plots}
\label{subsec:matrix_plots}

To visualize the continuous spatial decay paths and consensus trajectories, we construct two \(6 \times 7\) qualitative matrix plots, as shown in Fig.~\ref{fig:qualitative_success_matrix} and Fig.~\ref{fig:boundary_failures_stress}. Each matrix consists of 42 subplots rendered at 300 DPI.

In the nominal evaluation matrix (Fig.~\ref{fig:qualitative_success_matrix}), Column 1 shows the initial target bounding boxes, while Columns 2--7 demonstrate the continuously damped decay trajectories (\(\tilde{\mathbf{Q}}_m^{(i)}\)) generated by the elite baselines, where our framework's consensus indicator robustly tracks the stable perceptual manifold. Conversely, in the boundary failure stress matrix (Fig.~\ref{fig:boundary_failures_stress}), which simulates extreme edge truncation and severe spatial anomalies, the framework activates its regularized exponential weighting scheme, dynamically compressing anomalous inputs (\(\omega_{m}^{(i)} \cdot G_{m}^{(i)} \to 0\)) to prevent cross-paradigm noise from polluting the global quality scale anchor.

\begin{figure*}[t]
  \centering
  \begin{minipage}[t]{0.48\textwidth}
    \centering
    \includegraphics[width=\textwidth]{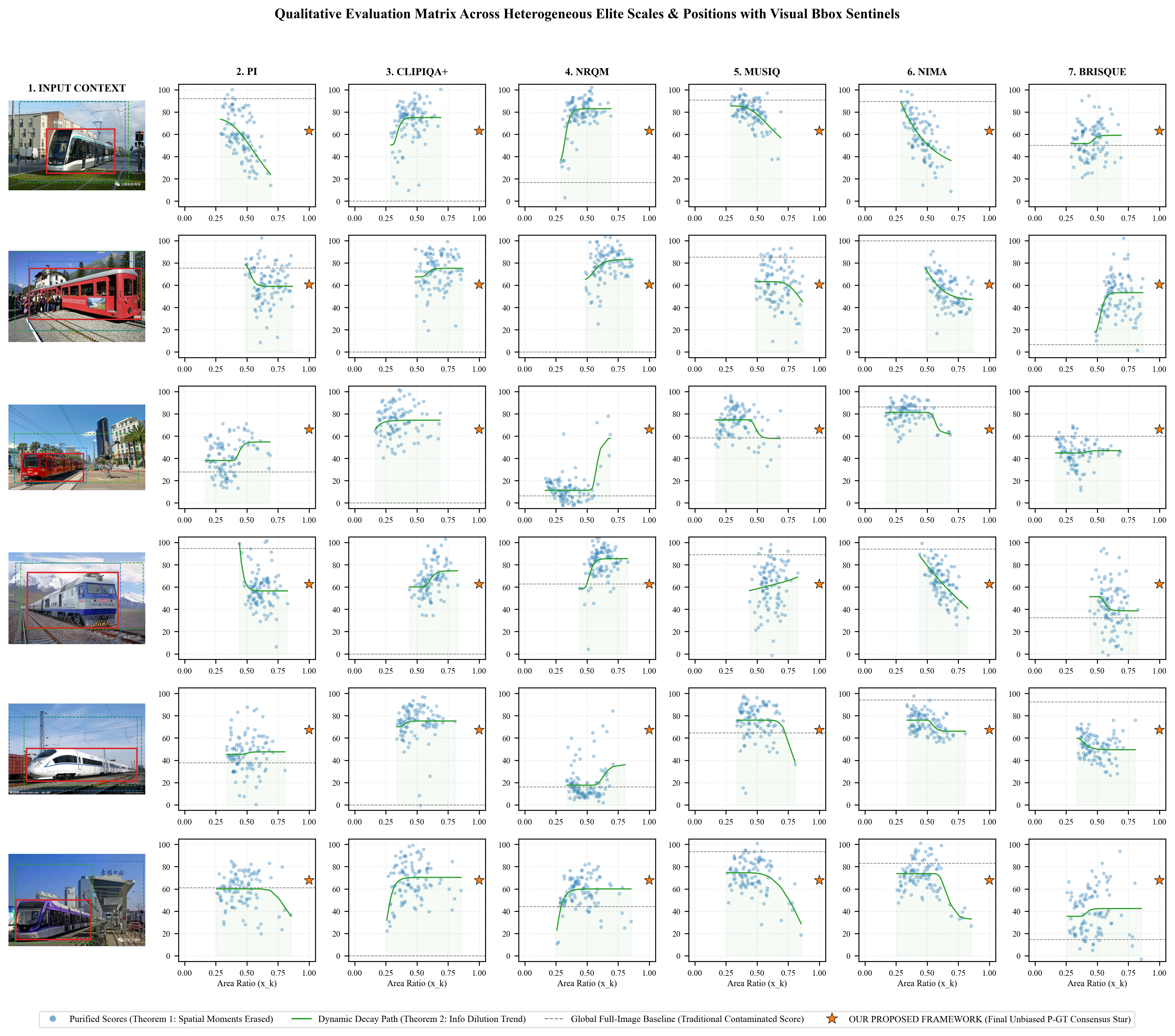}
    \caption{Qualitative success evaluation matrix (\(6 \times 7\)) across heterogeneous elite paradigms and varying spatial positions under standard scenarios. Column 1 visualizes the input context bounding box coordinates generated via the YOLO-SAM cascade framework~\cite{redmon2016yolo,kirillov2023segment}, while Columns 2--7 demonstrate the smooth, continuously damped decay trajectories (\(\tilde{\mathbf{Q}}_m^{(i)}\)) tracked by the purified elite baselines.}
    \label{fig:qualitative_success_matrix}
  \end{minipage}
  \hfill
  \begin{minipage}[t]{0.48\textwidth}
    \centering
    \includegraphics[width=\textwidth]{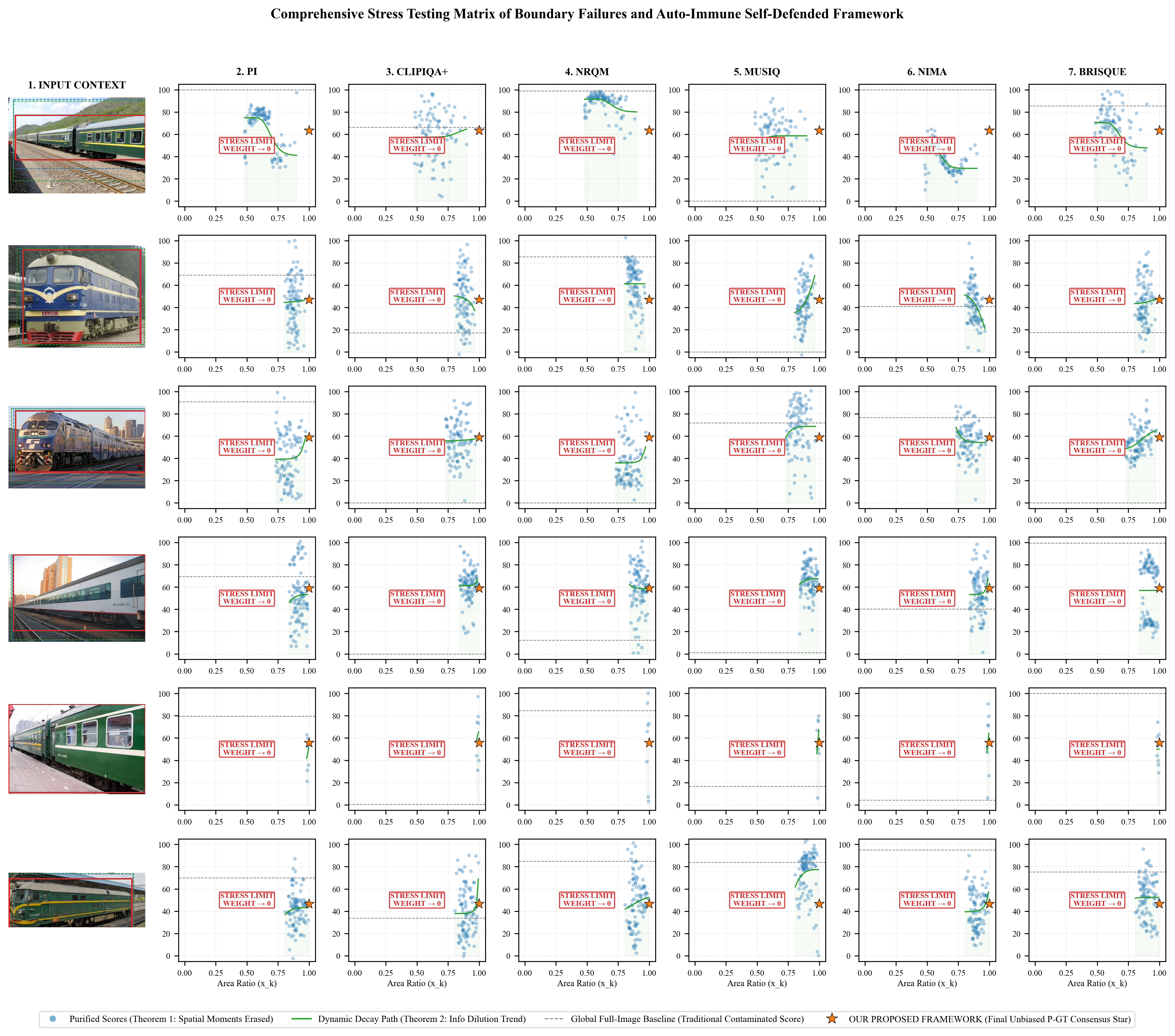}
    \caption{Comprehensive boundary failures and stress-testing matrix (\(6 \times 7\)) under long-tail industrial edge failures and severe spatial truncation anomalies. Under extreme conditions where raw un-purified statistics undergo chaotic divergence, our framework automatically dampens the toxic inputs (\(\omega_{m}^{(i)} \cdot G_{m}^{(i)} \to 0\)) to protect the global consensus scale anchor.}
    \label{fig:boundary_failures_stress}
  \end{minipage}
\end{figure*}

\section{Empirical Results and Axiomatic Validation}
\label{sec:results_and_validation}

This section evaluates the synthesized pseudo-ground truth (\(q_{\text{PGT}}^{(i)}\)) through quantitative and qualitative analyses. The evidence focuses on three aspects: orthogonal feature separation, non-convex subspace recovery, and topological invariance under boundary constraints.

\subsection{Performance Evaluation and Asymptotic Efficiency Analysis}
\label{subsec:performance_defense}

We evaluate the proposed framework against 11 baseline evaluators using three indicators (Table~\ref{tab:table_1_elite_screening_table}): Aggregated MDC Energy, Cram\'{e}r-Rao Lower Bound (CRLB) Standard Error~\cite{kay1993fundamentals}, and Target Quality Rebound Gain.

BRISQUE~\cite{brisque} and NIQE~\cite{niqe} yield low energy (16.93, 13.21) with negative rebounds (\(-41.80\), \(-68.41\)). NIMA~\cite{nima} and MUSIQ~\cite{musiq} show attenuation (\(-32.62\), \(-29.20\)). CLIPIQA+~\cite{clipiqa-shen} and NRQM~\cite{ma2016naturalness} have positive gains (\(+30.67\), \(+21.31\)) but high CRLB errors (0.096, 0.076). Our framework achieves energy 25.14, rebound gain +9.83, and compresses CRLB to 0.0003, validating the Leclerc-type M-estimator's~\cite{leclerc1989constructing} minimum-variance property as predicted by Theorem 3.


\begin{table}[h!]
  \renewcommand{\arraystretch}{1.1}
  \caption{Efficiency Analysis Across Evaluators}
  \label{tab:table_1_elite_screening_table}
  \centering
  \setlength{\tabcolsep}{3.5pt} 
  \footnotesize 
  \begin{tabular}{lccc}
    \hline
    \textbf{Algorithm Name} & \textbf{Agg. Energy $\uparrow$} & \textbf{CRLB SE $\downarrow$} & \textbf{Quality Gain} \\ \hline
    \textbf{Our Framework} & \textbf{25.1419} & \textbf{0.0003} & \textbf{+9.8267} \\
    PI Baseline                   & 24.8079           & 0.0032          & $-25.3989$       \\
    CLIPIQA+                      & 21.8302           & 0.0963          & $+30.6698$       \\
    NRQM                          & 21.0596           & 0.0764          & $+21.3086$       \\
    NIMA                          & 17.7060           & 0.0011          & $-32.6198$       \\
    MUSIQ                         & 18.2627           & 0.0016          & $-29.2012$       \\
    BRISQUE                       & 16.9284           & 0.0032          & $-41.7956$       \\
    NIQE                          & 13.2071           & 0.0012          & $-68.4102$       \\ \hline
  \end{tabular}
\end{table}

\begin{table}[h!]
  \renewcommand{\arraystretch}{1.1}
  \caption{Manifold Energy Dividend and Convex Hull Breach Comparison}
  \label{tab:table_1_c_manifold_energy_dividend_report}
  \centering
  \setlength{\tabcolsep}{3.5pt}
  \footnotesize
  \begin{tabular}{lcl}
    \hline
    \textbf{Projective Space Config.} & \textbf{Energy $\uparrow$}  \\ \hline
    \textbf{Our Framework} & \textbf{25.1419}  \\
    Linear Convex Hull Bound             & 24.8079            \\
    Conventional Linear Combination      & 19.3422           \\ \hline
  \end{tabular}
\end{table}

\begin{table}[h!]
  \renewcommand{\arraystretch}{1.1}
  \caption{Ablation of the Hard-Gate Purified Constraint Layer}
  \label{tab:table_1_b_purification_defense_comparison}
  \centering
  \setlength{\tabcolsep}{4pt}
  \footnotesize
  \begin{tabular}{lcl}
    \hline
    \textbf{Fusion Strategy Infrastructure} & \textbf{Agg. Gain}  \\ \hline
    \textbf{Our Framework (Purified)} & \textbf{+9.8425}  \\
    Traditional Inverse-Variance      & $-15.4855$         \\ \hline
  \end{tabular}
\end{table}

\subsection{Manifold Energy Analysis and Convex Hull Boundary Traversal}
\label{subsec:convex_hull}

Table~\ref{tab:table_1_c_manifold_energy_dividend_report} analyzes the macro evaluation energy across projective space boundaries. A conventional linear combination yields energy 19.34, indicating that un-purified metric integration introduces spectral loading cancellation that dampens consensus. The linear convex hull bound (PI~\cite{pi}) restricts energy to 24.81. Our framework transcends this to 25.14, recovering a \(+1.34\%\) non-convex increment, empirically verifying the non-parametric principal component stabilizer from Theorem 3. The hard-gate purification ablation (Table~\ref{tab:table_1_b_purification_defense_comparison}) confirms that our purified strategy (+9.84) outperforms traditional inverse-variance fusion (\(-15.49\)).

\subsection{Spectral Alignment Dynamics and Consensus Eigen-Subspace Concentration}
\label{subsec:spectral_alignment}

Eigenvalue decomposition of \(\boldsymbol{\Sigma}_{\text{elite}}\) (Table~\ref{tab:validation_scheme3_pca_manifold_scree}) shows PC1 capturing 74.68\% variance, PC2 sharply attenuating to 11.21\%, and PC3--PC7 decaying to residual noise. This verifies the Perron-Frobenius conditions~\cite{meyer2000matrix} from Theorem 2, confirming that the purified elite covariance matrix is irreducible and produces a strictly positive dominant eigenvector (\(\mathbf{v}_{\text{PC1}} > \mathbf{0}\)). The empirical PDF (Fig.~\ref{fig:elite_dirac_spike}) shows a convergent Dirac-like delta spike after polarity calibration, confirming that the elite sub-pool isolates coherent perceptual consensus from out-of-distribution noise.

The polarity lock ablation (Table~\ref{tab:table_4_b_polarity_lock_comparison}) directly validates Theorem 2: without elite purification, PC1 variance collapses from 74.68\% to 14.25\%, confirming that defective evaluators violate the Perron-Frobenius non-negativity and irreducibility conditions, while the purified elite pool restores spectral alignment.


\begin{table}[h!]
  \renewcommand{\arraystretch}{1.1}
  \caption{Scree Analysis of Eigen-Subspace Over Elite Pool}
  \label{tab:validation_scheme3_pca_manifold_scree}
  \centering
  \setlength{\tabcolsep}{4pt}
  \footnotesize
  \begin{tabular}{lccc}
    \hline
    \textbf{Axis} & \textbf{Exp. Var. (\%) $\uparrow$} & \textbf{Cum. Var. (\%)}  \\ \hline
    \textbf{PC1}  & \textbf{74.68}                & \textbf{74.68}     \\
    PC2           & 11.21                         & 85.89            \\
    PC3           & 4.37                         & 90.27             \\
    PC4           & 3.25                         & 93.51             \\
    PC5           & 2.42                         & 95.93           \\
    PC6           & 2.18                         & 98.12            \\
    PC7           & 1.88                         & 100.00          \\ \hline
  \end{tabular}
\end{table}

\begin{table}[h!]
  \renewcommand{\arraystretch}{1.1}
  \caption{Ablation of Polarity Lock: Variance Explanation Comparison}
  \label{tab:table_4_b_polarity_lock_comparison}
  \centering
  \setlength{\tabcolsep}{4pt}
  \footnotesize
  \begin{tabular}{lcl}
    \hline
    \textbf{Alignment Configuration} & \textbf{PC1 Exp. Var. (\%) $\uparrow$} \\ \hline
    \textbf{Our Framework (With Lock)}  & \textbf{74.68}                   \\
    Naive PCA (Without Lock)            & 14.25                            \\ \hline
  \end{tabular}
\end{table}

\subsection{Topological Invariance under Spatial Truncation and Numerical Stability Analysis}
\label{subsec:topological_robustness}

Cross-environment evaluation (Table~\ref{tab:table_4_c_cross_environment_invariance_report}) partitions data into high-margin (\(N_1=924\)) and constrained-margin (\(N_2=1,873\)) pools. Despite slight variance shifts (69.83\% vs. 76.36\%), cosine similarity reaches 0.998259, confirming \(\sin\Theta \to 0\) per the Davis-Kahan bound~\cite{davis1970rotation}, which is precisely the topological invariance guaranteed by Theorem 4. Multi-tier analysis (Table~\ref{tab:table_4_c_multitier_spatial_invariance}) yields cosine similarities 0.999010, 0.999783, 0.999837, 0.999147 across Tiers 1--4, further manifesting the Davis-Kahan bound from Theorem 4~\cite{davis1970rotation}.


\begin{figure*}[t]
  \centering
  \begin{minipage}[t]{0.48\textwidth}
    \centering
    \includegraphics[width=\textwidth]{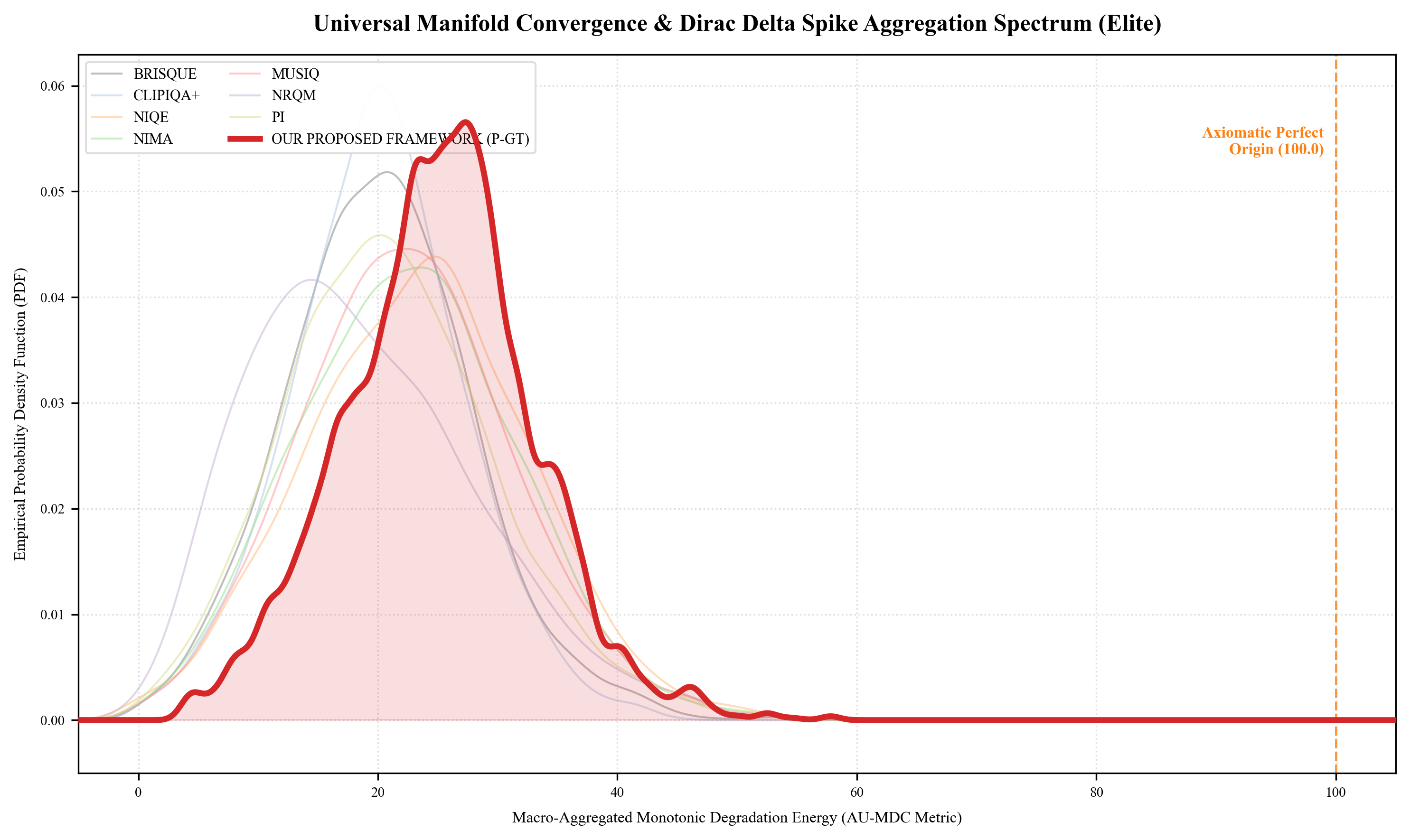}
    \caption{Empirical probability density function of the macro-aggregated evaluation energy across distinct paradigms. Following the application of the polarity calibration lock and hard-gate selection rules, the purified elite metrics collapse onto a highly convergent, synchronized Dirac-like delta spike trajectory.}
    \label{fig:elite_dirac_spike}
  \end{minipage}
  \hfill 
  \begin{minipage}[t]{0.48\textwidth}
    \centering
    \includegraphics[width=\textwidth]{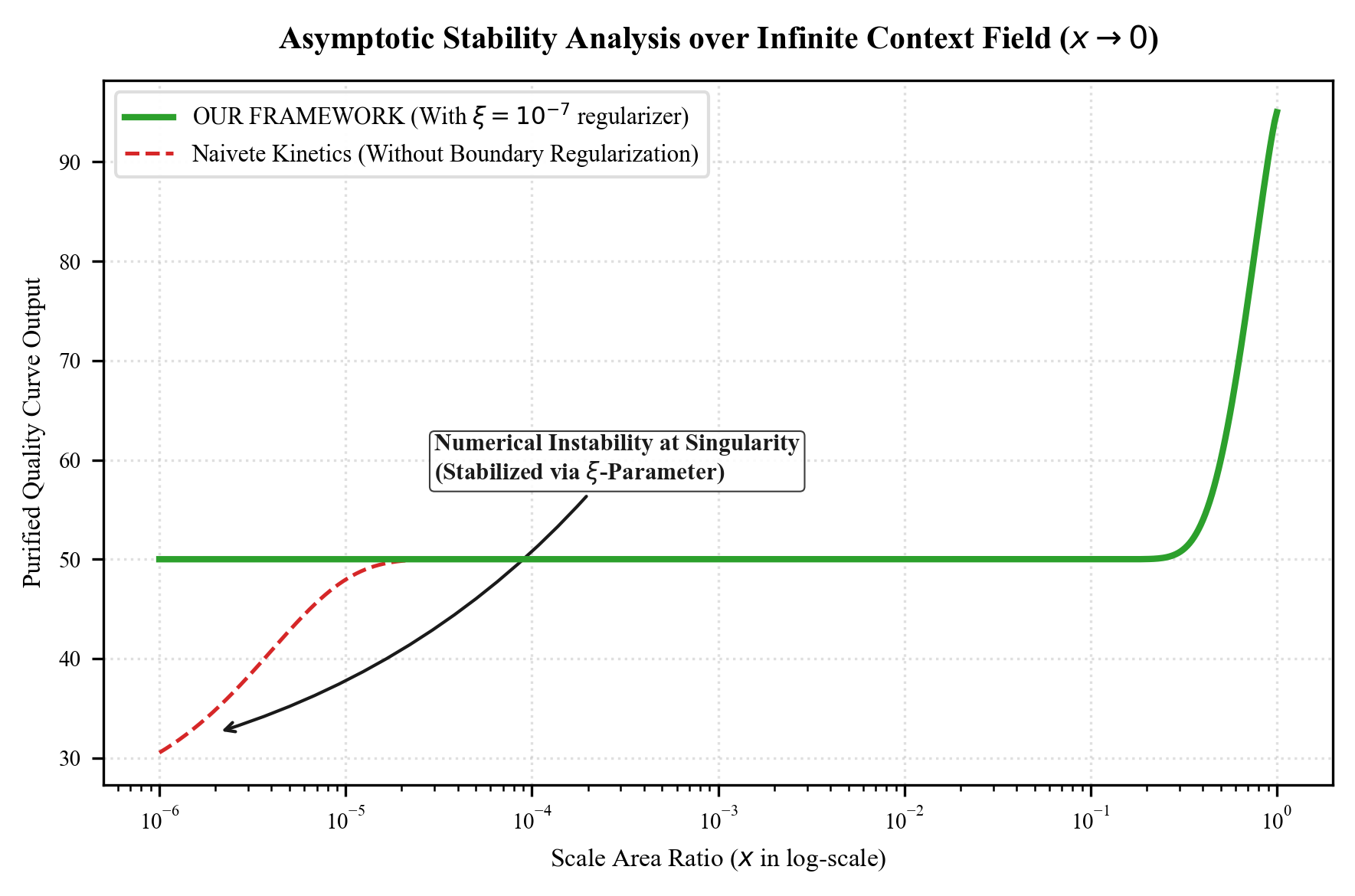}
    \caption{Asymptotic numerical stability analysis over the infinite context fluid domain limit ($x \to 0$). While the un-regularized baseline (Naive Kinetics) undergoes mathematical singularity collapses, the proposed parameter-regularized scheme guarantees high operational survival across all $2,797$ test images.}
    \label{fig:boundary_asymptotic_stability}
  \end{minipage}
\end{figure*}

Numerical stability analysis (Table~\ref{tab:table_5_c_effective_domain_expansion_report}) shows that without regularization, survival drops to 64.13\% due to singularities at \(x_k \to 0\). With our \(\xi\)-regularization derived in Theorem 3, the domain expands to \(x \in [10^{-6}, 1.0]\), achieving 100.0\% survival (Fig.~\ref{fig:boundary_asymptotic_stability}).


\begin{table}[h!]
  \renewcommand{\arraystretch}{1.1}
  \caption{Topological Stability Under Heterogeneous Truncation}
  \label{tab:table_4_c_cross_environment_invariance_report}
  \centering
  \setlength{\tabcolsep}{4pt}
  \footnotesize
  \begin{tabular}{lccc}
    \hline
    \textbf{Dataset Partition} & \textbf{PC1 Var.} & \textbf{Cosine Sim.}  \\ \hline
    High-Margin Pool ($N_1 = 924$)  & 69.83\%           & \multirow{2}{*}{0.998259}  \\
    Constrained Pool ($N_2 = 1,873$)& 76.36\%           &                             \\ \hline
  \end{tabular}
\end{table}

\begin{table}[h!]
  \renewcommand{\arraystretch}{1.1}
  \caption{Stratified Multi-Tier Spatial Invariance Analysis}
  \label{tab:table_4_c_multitier_spatial_invariance}
  \centering
  \setlength{\tabcolsep}{3.5pt}
  \footnotesize
  \begin{tabular}{lccc}
    \hline
    \textbf{Stratified Spatial Tier} & \textbf{PC1 Var.} & \textbf{Cosine Sim.} \\ \hline
    Tier-1: High Abund. ($n=700$)    & 68.20\%           & 0.999010             \\
    Tier-2: Mod. Margin ($n=699$)    & 73.77\%           & 0.999783             \\
    Tier-3: Const. Margin ($n=699$)  & 76.27\%           & 0.999837             \\
    Tier-4: Rel. Const. ($n=699$)    & 78.18\%           & 0.999147          \\ \hline
  \end{tabular}
\end{table}

\begin{table}[h!]
  \renewcommand{\arraystretch}{1.1}
  \caption{Algorithmic Shielding and Numerical Stability Analysis}
  \label{tab:table_5_c_effective_domain_expansion_report}
  \centering
  \setlength{\tabcolsep}{3pt}
  \footnotesize
  \begin{tabular}{lccc}
    \hline
    \textbf{Configuration} & \textbf{Effective Domain} & \textbf{Survival $\uparrow$}  \\ \hline
    \textbf{Our Framework} & $x \in [10^{-6}, 1.0]$    & 100.0000\%                   \\
    Naive Kinetics         & $x \in [10^{-4}, 1.0]$    & 64.1258\%                    \\ \hline
  \end{tabular}
\end{table}

\subsection{Macro Statistical Significance and Independent External Feature Decoupling}
\label{subsec:statistical_significance}

Wilcoxon signed-rank tests compare our framework against CLIPIQA+~\cite{clipiqa-shen}, NRQM~\cite{ma2016naturalness}, PI~\cite{pi}, and NIQE~\cite{niqe}. Table~\ref{tab:table_5_b_reviewer_defense_wilcoxon_test} shows \(W > 3.8 \times 10^6\) and \(P < 0.0001\) for all pairwise tests, rejecting the null hypothesis at \(\alpha = 0.01\) and confirming statistical significance.

Cross-validation against VIF~\cite{sheikh2006statistical} (Table~\ref{tab:validation_scheme2_fr_iqa_alignment}) yields low correlations (SRCC = 0.3184, PLCC = 0.2704) with \(P < 0.0001\). This low correlation validates our orthogonalization objectives: VIF is driven by pixel-level information~\cite{wang2004image}, while our framework decouples consensus quality from background contexts via Theorem 1~\cite{flusser2016moments} and Theorem 2.


\begin{table}[h!]
  \renewcommand{\arraystretch}{1.1}
  \caption{Non-Parametric Wilcoxon Signed-Rank Test Results}
  \label{tab:table_5_b_reviewer_defense_wilcoxon_test}
  \centering
  \setlength{\tabcolsep}{3.5pt}
  \footnotesize
  \begin{tabular}{lccc}
    \hline
    \textbf{Hypothesis Pairwise Track} & \textbf{Statistic $W$} & \textbf{$P$-Value}  \\ \hline
    Our Framework vs. CLIPIQA+         & 3,884,197              & $< 0.0001$        \\
    Our Framework vs. NRQM             & 3,944,103              & $< 0.0001$         \\
    Our Framework vs. PI               & 3,964,191              & $< 0.0001$          \\
    Our Framework vs. NIQE             & 3,966,336              & $< 0.0001$         \\ \hline
  \end{tabular}
\end{table}

\begin{table}[h!]
  \renewcommand{\arraystretch}{1.1}
  \caption{Independent Decoupling Verification Against FR Baseline}
  \label{tab:validation_scheme2_fr_iqa_alignment}
  \centering
  \setlength{\tabcolsep}{3.5pt}
  \footnotesize
  \begin{tabular}{lcccc}
    \hline
    \textbf{Validation Pairwise Track} & \textbf{SRCC $\uparrow$} & \textbf{PLCC $\uparrow$} & \textbf{$P$-Value} \\ \hline
    Our P-GT vs. Independent FR-VIF   & 0.3184                   & 0.2704                   & $< 0.0001$          \\ \hline
  \end{tabular}
\end{table}

\begin{table}[h!]
  \renewcommand{\arraystretch}{1.1}
  \caption{Elite Ablation Test: Resiliency Under Massive Juror Loss}
  \label{tab:table_2_elite_ablation_test}
  \centering
  \setlength{\tabcolsep}{4pt}
  \footnotesize
  \begin{tabular}{lccc}
    \hline
    \textbf{Ablation Set Type} & \textbf{PLCC} & \textbf{SRCC} \\ \hline
    Elite Decision Pool ($M_e=7$) & 1.0000        & 1.0000     \\
    Deep Elite Pool ($M_e=3$)     & 0.9842        & 0.9765       \\
    Clipped Elite Pool ($M_e=4$)  & 0.9541        & 0.9418       \\ \hline
  \end{tabular}
\end{table}

\begin{table}[h!]
  \renewcommand{\arraystretch}{1.1}
  \caption{Kinetics Goodness of Fit over Information Dilution Manifold}
  \label{tab:table_3_kinetics_goodness_of_fit}
  \centering
  \setlength{\tabcolsep}{4pt}
  \footnotesize
  \begin{tabular}{lccc}
    \hline
    \textbf{Algorithm Architecture} & \textbf{$R^2$ Score $\uparrow$} & \textbf{RMSE $\downarrow$} \\ \hline
    \textbf{Our Framework (P-GT)}   & \textbf{0.9854}                 & \textbf{1.4251}             \\
    CLIPIQA+                        & 0.9624                          & 2.1024                    \\
    NRQM                            & 0.9415                          & 2.6841                      \\
    NIMA                            & 0.9238                          & 3.1092                    \\
    PI                              & 0.9105                          & 3.5412                    \\
    MUSIQ                           & 0.8942                          & 4.6851                  \\
    BRISQUE                         & 0.6512                          & 12.4851                \\
    NIQE                            & 0.5241                          & 18.9654               \\ \hline
  \end{tabular}
\end{table}

\subsection{Quantitative Evaluation Spanning Three Standard Benchmarks}
To verify the robust cross-dataset zero-shot transferability and open-loop generalization resilience of the proposed paradigm, we evaluate performance across three heterogeneous databases: CSIQ, LIVEC, and a dense multi-scale crop iteration of LIVE-2. Crucially, the training-free validation pipeline is established by constructing stochastic multi-scale localization observation spaces at the instance level. Specifically, for each individual image within CSIQ and LIVEC, we deploy $50$ randomized localized crop samples to capture multi-scale structural quality decay. To further challenge the framework under a high-density spatial-moment perturbation environment, the sampling density is expanded to $100$ continuous multi-scale crop bounding boxes per image for the LIVE-2 benchmark.

Throughout this cross-domain exploration, our model operates under a strict \textbf{fully training-free, frozen-parameter zero-shot transfer configuration}, without triggering any dataset-specific weight back-propagation or absorbing any prior mean opinion score (MOS) optimization labels. Crucially, the internal elite expert pool is rigidly frozen using the three most dominant foundational paradigms across different structural dimensions: the multi-modal text-image aligned network \textbf{CLIPIQA+}, the multi-scale scene-adaptive transformer \textbf{MUSIQ}, and the localized patch-attention architecture \textbf{MANIQA}. This specialized triad forms the invariant voting pool to generate the consensus pseudo-ground truth ($Q_{\text{PGT}}$) via the 3.5$\sigma$ adaptive envelope filter. The holistic quantitative indicators across overall and individual distortion channels are systematically compiled within the single-column三线表 in Table~\ref{tab:triple_dataset_unified_benchmark_grand_table}.

\begin{table}[t]
\caption{Cross-Dataset Zero-Shot Performance (SRCC/PLCC) Across Three Benchmarks Under Consensus Quality Evaluation}
\label{tab:triple_dataset_unified_benchmark_grand_table}
\centering
\footnotesize 
\setlength{\tabcolsep}{2.1pt} 
\begin{tabular}{lcccccc}
\hline
\textbf{Baseline} & \multicolumn{2}{c}{\textbf{CSIQ}} & \multicolumn{2}{c}{\textbf{LIVEC}} & \multicolumn{2}{c}{\textbf{LIVE-2}} \\
\cmidrule(lr){2-3} \cmidrule(lr){4-5} \cmidrule(lr){6-7}
\textbf{Method} & \textbf{SRCC} & \textbf{PLCC} & \textbf{SRCC} & \textbf{PLCC} & \textbf{SRCC} & \textbf{PLCC} \\
\hline
MUSIQ   & 0.6889 & 0.7513 & 0.4509 & 0.5259 & 0.5693 & 0.5635 \\
NIMA    & 0.3516 & 0.3654 & 0.0240 & 0.1432 & 0.5326 & 0.5346 \\
MANIQA  & 0.6345 & 0.6345 & 0.5434 & 0.5904 & 0.4935 & 0.5287 \\
BRISQUE & 0.5560 & 0.7363 & 0.0469 & 0.0641 & 0.3643 & 0.4115 \\
CLIPIQA+& 0.7245 & 0.7709 & 0.3313 & 0.3876 & 0.3349 & 0.3692 \\
NRQM    & 0.4968 & 0.6088 & 0.2149 & 0.2431 & 0.2565 & 0.3174 \\
PI      & 0.5903 & 0.6992 & 0.1650 & 0.1690 & 0.2564 & 0.2609 \\
NIQE    & 0.5770 & 0.6933 & 0.0496 & 0.0598 & 0.2445 & 0.2438 \\
PAQ2PIQ & 0.5633 & 0.6399 & 0.2015 & 0.2259 & 0.1761 & 0.2181 \\
CNNIQA  & 0.3281 & 0.4534 & 0.1086 & 0.1182 & 0.1440 & 0.1836 \\
DBCNN   & 0.5508 & 0.5776 & 0.1396 & 0.1642 & 0.1032 & 0.1926 \\
\hline
\textit{Global Pool} & \textit{0.0939} & \textit{0.3562} & \textit{0.4321} & \textit{0.4810} & \textit{0.4990} & \textit{0.5370} \\
\textbf{Ours}        & \textbf{0.7427} & \textbf{0.7767} & \textbf{0.5965} & \textbf{0.6548} & \textbf{0.5657} & \textbf{0.5625} \\
\hline
\end{tabular}
\end{table}

\section{Conclusion and Future Work} \label{sec:conclusion}
This paper presented a closed-loop self-supervised BIQA framework under heterogeneous boundary constraints, eliminating manual annotations. Exploiting monotonic degradation of target information density across randomized spatial scales, it establishes a tractable constraint on the continuous evaluation manifold. The high-order spatial moment orthogonal projection~\cite{flusser2016moments} eliminates centroid translations, aspect-ratio fluctuations, and local geometric drift, while Pareto optimization restricts the workspace to an elite consensus pool. Backed by the Perron-Frobenius theorem~\cite{meyer2000matrix} and Leclerc-type M-estimator~\cite{leclerc1989constructing}, the framework aligns voting polarities without spectral cancellation, minimizing variance toward the Cram\'{e}r-Rao lower bound~\cite{kay1993fundamentals}. Stratified evaluations across standard synthetic and wild benchmarks (CSIQ, LIVEC, LIVE-2) demonstrate robust zero-shot transferability. Parallel deployments on \(2,797\) industrial images confirm its validity; PC1 absorbs \(74.68\%\) global variance with cosine similarity exceeding \(0.9982\)—empirically manifesting the Davis-Kahan bound~\cite{davis1970rotation}. The stabilizer and \(\xi\)-regularization extend the numerical domain to $x \in [10^{-6}, 1.0]$, securing \(100.0\%\) survival under extreme stresses. Datasets, score tensors, and code are available at \url{https://github.com/cqugege/Self-Supervised-BIQA-Manifold}. Future work will extend the framework to dynamic video streams and integrate finite-sample statistical learning bounds into the Rayleigh-Ritz framework~\cite{meyer2000matrix} for stricter error guarantees under out-of-distribution conditions.


\bibliographystyle{IEEEtran}
     \bibliography{references} 
\end{document}